\def\eqref#1{equation~\ref{#1}}
\def\1{\bm{1}}
\def\vone{{\bm{1}}}
\def\va{{\bm{a}}}
\def\vb{{\bm{b}}}
\def\vf{{\bm{f}}}
\def\vg{{\bm{g}}}
\def\mC{{\bm{C}}}
\def\mP{{\bm{P}}}
\DeclareMathAlphabet{\mathsfit}{\encodingdefault}{\sfdefault}{m}{sl}
\SetMathAlphabet{\mathsfit}{bold}{\encodingdefault}{\sfdefault}{bx}{n}
\def\gX{{\mathcal{X}}}
\def\gY{{\mathcal{Y}}}
\newcommand{\R}{\mathbb{R}}
\DeclareMathOperator*{\argmax}{arg\,max}
\DeclareMathOperator*{\argmin}{arg\,min}
\newcommand{\cellhi}{\cellcolor{RoyalBlue!40}}
\newcommand{\met}{\text{MET}}
\newtheorem{thm}{Theorem}
\newtheorem{lem}{Lemma}
\newtheorem{cor}{Corollary}
\newtheorem{prob}{Problem}
\newtheorem{defn}{Definition}
\newtheorem{prop}{Proposition}
\newtheorem{rem}{Remark}
\newtheorem{property}{Property}
\DeclareMathOperator*{\dist}{dist}
\numberwithin{equation}{section}
\begin{document}

\title{Project and Forget: Solving Large-Scale Metric Constrained Problems}

\author{%
  \name Rishi Sonthalia \\
  \addr Department of Mathematics\\
  University of Michigan\\
 Ann Arbor, Michigan, 48104\\
  \email rsonthal@umich.edu \\
  \AND
 \name Anna C.~Gilbert \\
  \addr Departments of Mathematics and Statistics \& Data Science\\ Yale University \\
 New Haven, Connecticut, 06510\\
  \email anna.gilbert@yale.edu }

\editor{}

\maketitle

\begin{abstract}

Many important machine learning problems can be formulated as highly constrained convex optimization problems. One important example is metric constrained problems. In this paper, we show that standard optimization techniques can not be used to solve metric constrained problem. 

To solve such problems, we provide a general active set framework, called \textsc{Project and Forget}, and several variants thereof that use Bregman projections. \textsc{Project and Forget} is a general purpose method that can be used to solve highly constrained convex problems with many (possibly exponentially) constraints. We provide a theoretical analysis of \textsc{Project and Forget} and prove that our algorithms converge to the global optimal solution and have a linear rate of convergence. We demonstrate that using our method, we can solve large problem instances of general weighted correlation clustering, metric nearness, information theoretic metric learning and quadratically regularized optimal transport; in each case, out-performing the state of the art methods with respect to CPU times and problem sizes. 


\end{abstract}



\begin{keywords}
    Large Scale Convex Optimization, Metric Constrained Optimization, Metric Learning, Correlation Clustering
\end{keywords}

\section{Introduction}


Given a set of dissimilarity measures amongst data points, many machine learning problems are considerably ``easier'' if these dissimilarity measures adhere to a metric. Furthermore, learning the metric that is most ``consistent'' with the input dissimilarities or the metric that best captures the relevant geometric features of the data (e.g., the correlation structure in the data) is a key step in efficient, approximation algorithms for classification, clustering, regression, and feature selection. In practice, these metric learning problems are formulated as convex optimization problems subject to metric constraints, such as the triangle inequality, on all the output variables. Because of the large number of constraints, researchers have been forced to restrict either the kinds of metrics learned or the size of the problem that can be solved. In many cases, researchers have restricted themselves to learning (weighted) Euclidean or Mahalanobis metrics. This approach is, however, far from ideal as the inherent geometry of many data sets necessitates different types of metrics. Therefore, we need to develop optimization techniques that can optimize over the space of all metrics on a data set. 

Many of the existing standard optimization methods suffer from significant drawbacks when solving metric constrained problems that hamper performance and restrict the instance size.  Gradient based algorithms such as projected gradient descent (e.g., \citet{Beck2009AFI,Nesterov1983AMF}) or Riemannian gradient descent require a projection onto the space of all metrics which, in general, is an intractable problem. One modification of this approach is to sub-sample the constraints and then project onto the sampled set (see \citet{Nedic2011RandomAF, Polyak2001RandomAF, Wang2013IncrementalCP, Wang2015RandomMP}). For metric constrained problems, however, there are many more constraints than data points, so the condition numbers of the problems are quite high and, as a result, these algorithms tend to require a large number of iterations. 

Another standard general optimization technique is based on the Lagrangian method. These algorithms augment the objective (or introduce a barrier function) by adding a term for each constraint.  Examples of such methods include the interior point method, the barrier method, and the Alternating Direction Method of Multipliers. These methods run into two different kinds of problems. First, computing the gradient becomes an intractable problem because of the large number of constraints. One fix could be to sub-sample the constraints and compute only those gradients, but this approach runs into the same drawbacks as we discussed before. The other option is to incrementally update the Lagrangian, looking at one constraint at a time. Traditionally, these methods require us to cycle through all the constraints. One such method is Bregman cyclic method and we note that the requirement to examine the constraints cyclically is an aspect that is highlighted in various previous works \cite{Censor98thedykstra, dykstra, book}.  This is time consuming with metric constraints as we have cycle through at least $O(n^3)$ many constraints. Many applications that use this method sidestep the issue either by restricting the number of constraints and solving a heuristic problem \citep{itml}, by solving smaller sized problems \citep{Dhillon2007MatrixNP}, or by trying to parallelize the projections \citep{veldt2}. 

A third approach is to use traditional active set methods such as Sequential Linear (Quadratic) Programming \citep{slp,sqp} or the algorithm from \cite{SCHITTKOWSKI20092999}. In general, these methods maintain a set of constraints $C$ that is assumed to be the true set of active constraints. During each iteration, they \emph{completely} solve the sub-problem defined by the constraints in $C$. They then check which of the constraints in $C$ are inactive and remove those constraints. They also search for and add new violated constraints. These methods run into the problem that each iteration is computationally expensive and, in some cases, may not be tractable because of the large number violated constraints. If the size of $C$ is reduced, then the number of iterations becomes too large, again making the problem intractable. 

One final approach is to use cutting planes. The performance of this method is heavily dependent on the cut selection process (see \citet{Dey2018TheoreticalCT, Poirrier2019OnTD} for deep discussions). The discovery of Gomory cuts \citep{gomory1960algorithm} and other subsequent methods such as branch and bound, has led to the viability of the cutting plane method for solving mixed integer linear programs. This success has, however, not transferred to other problems. In general, if the cuts are not selected appropriately, the algorithm could take an exponential number of iterations; i.e., it might add an exponential number of constraints. To use this method, we must show for each problem that the specific cutting plane selection method results in a feasible algorithm (see \citet{Chandrasekaran2012TheCP} for an example). 

We note that the only method that has found any success for metric constrained problems is the Lagrangian based method known as the cyclic Bregman method \citep{BREGMAN1967200}. This was first used by \cite{Brickell2008TheMN} to solve metric constrained problems. Follow up work such as \cite{itml, veldt, veldt2} also used this methods or variants of this methods to solve metric constrained problems. Although, we note the significant drawbacks to this approach above.

In this paper, we provide an active set algorithm, \textsc{Project and Forget}, that uses Bregman projections to solve \textbf{convex optimization problems with a large number of (possibly exponentially linear inequality) constraints}. Since our algorithm is based on the cyclic Bregman method, it has the rapid rate of convergence of the Bregman method, along with all the benefits of being an active set method. This method overcomes the weaknesses of both the traditional active set methods and Bregman cyclic method. First, we overcome the drawbacks of the active set methods by allowing the introduction of new and the removal of old constraints \emph{without having to completely solve convex programs as intermediate steps}. In particular, our algorithm examines each constraint once, before it introduces new constraints and forgets old constraints, thus allowing us to converge rapidly to the true active constraint set. We overcome the drawback of the traditional Bregman method by cycling through the current active set only. Thus, making each iteration much faster. Our new algorithm \textsc{Project and Forget}, is the first iterative Bregman projection based algorithm for convex programs that does not require us to cyclically examine all the constraints. 


The major contributions of our paper are as follows:

\begin{enumerate}[nosep]
\item For the case when we have linear inequality constraints, we provide a Bregman projection based algorithm that does not need to look at the constraints cyclically. We prove that our algorithm converges to the global optimal solution and that the optimality error ($L_2$ distance of the current iterate to the optimal) asymptotically decays at an exponential rate. We also show that because of the \textsc{Forget} step, when the algorithm terminates, the set of constraints remembered are exactly the active constraints.

\item For the case when we have general convex constraints, we provide a Bregman projection based algorithm that does not need to look at the constraints cyclically. We prove that our algorithm converges to the global optimal solution and that when we have quadratic objective function, the optimality error ($L_2$ distance of the current iterate to the optimal) asymptotically decays at an exponential rate.  We also show that because of the \textsc{Forget} step, when the algorithm terminates, the set of constraints remembered are exactly the active constraints.

\item We solve the weighted correlation clustering problem \cite{Bansal2004} on a graph with over $130,000$ nodes. To solve this problem with previous methods, we would need to solve a linear program with over $10^{15}$ constraints. Furthermore, we demonstrate our algorithms superiority by outperforming the current state of the art in terms of CPU times.

\item We use our algorithm to develop a new one that solves the metric nearness problem \cite{Brickell2008TheMN}. We show that our algorithm outperforms the current state of the art with respect to CPU time and can be used to solve the problem for non-complete graphs. 

\item We also show the generality of our algorithm by using it to solve the quadratically regularized optimal transport problem. We show that, using our algorithm, we can solve this problem faster and using less memory than many standard solvers.


\end{enumerate}

\section{Preliminaries} \label{sec:prelims}

We start by presenting the general version of the problem before focusing on metric constrained problems in later sections.

\subsection{Convex Programming}  \label{sec:generalProblem}

Given a strictly convex function $f: \mathbb{R}^d \to \mathbb{R}$,  and a finite family of convex sets $\mathcal{F} = \{C_i\}$ we want to find the unique point $x^* \in \bigcap_i C_i =: C$ that solves the following problem. 
\begin{equation}
\label{problem:general}
\begin{array}{ll@{}ll}
\text{minimize}  &f(x)  \\
\text{subject to}& \forall i, \, x \in C_i.\\
\end{array}
\end{equation}

We refer to each $C_i$ as a \emph{constraint set} and $C := \bigcap_i C_i$ as the \emph{feasible region}. We shall assume that $C$ is not empty; i.e., there is at least one feasible point. Since we have a large number of constraint sets, we access the constraint sets only through an oracle that has one of the two following separation properties. 

\begin{property} \label{prop:sep} $\mathcal{Q}$ is a deterministic separation oracle for a family of convex sets $\mathcal{F} = \{ C_i\}$, if there exists a non-decreasing, continuous function $\phi$, with $\phi(y) = 0 \iff y = 0$, such that on input $x \in \mathbb{R}^d$, $\mathcal{Q}$ either certifies $x \in C$ or returns a list $\mathcal{L} \subset \mathcal{F} $ such that \[ \max_{\tilde{C} \in \mathcal{L}} \dist(x,\tilde{C}) \ge \phi(\dist(x,C)), \] where for a point $x$ and set $B$, $\dist(x,B) = \inf_{w \in B} \|w-x\|$. \end{property}

There a few things that we would like to highlight about this definition. First, the list $\mathcal{L}$ \emph{need not contain all violated constraints.} That is, given $x$, there can be some $C_i \in \mathcal{F}$ such that $x \notin C_i$ \emph{and} $C_i \not\in \mathcal{L}$. In fact, there could be many such $C_i$. However, what we do require is that the maximum distance from $x$ to the constraints in $\mathcal{L}$ is at least some non-decreasing function $\phi$ of the distance from $x$ to $C$.

\begin{property} \label{prop:rand} $\mathcal{Q}$  is a random separation oracle for a family of convex sets $\mathcal{F}$, if there exists a lower bound $\tau > 0$, such that on input $x \in \mathbb{R}^d$, $\mathcal{Q}$ returns a list $\mathcal{L} \subset \mathcal{F} $ such that 
\[ 
	\forall \tilde{C} \in \mathcal{F},\, Pr [\tilde{C} \in \mathcal{L}] \ge \tau. 
\] 
\end{property}

\begin{rem} The random separation oracle need not decide whether $x \in C$. \end{rem}

\subsubsection{Linear Inequality Constraints}

In practice, most problems do not have the most general of convex constraints. Indeed, linear inequality constraints are common. In such a case, our constraints sets $C_i$ are half spaces. In particular, we denote each half space by $H_i$ instead of $C_i$. Additionally, for each $H_i$, we know that there exists $a_i \in \mathbb{R}^d$ and $b_i \in \R$ such that 

\[
    H_i = \{ x \in \R^d : \langle a_i, x \rangle \le b_i \}.
\]
\noindent Thus, if $A$ is the matrix whose rows are given by $a_i$ and $b$ is the vector whose coordinates are $b_i$, then our feasible region $C$ can be represented as follows:
\[
    C = \{ x \in \R^d : Ax \le b \}.
\]

\noindent In this case, we can reformulate Problem \ref{problem:general} as follows. 
\begin{equation}
\label{problem:linear}
\begin{array}{ll@{}ll}
\text{minimize}  &f(x)  \\
\text{subject to}& Ax \le b\\
\end{array}
\end{equation}

Solving this problem for general convex functions $f$ is too monumental a task. We restrict ourselves to a rich class of functions known as Bregman functions. First, we define the generalized Bregman distance.

\begin{defn} \label{defn:bdist} Given a convex function $f(x): S \to \R$ whose gradient is defined on $S$, we define its \emph{generalized Bregman distance} $D_f : S \times S \to \R$ as $D_f(x,y) = f(x) - f(y) - \langle\nabla f(y), x-y \rangle. $
\end{defn}

\begin{defn} \label{defn:bfunction} 
A function $f: \Lambda \to \mathbb{R}$ is called a Bregman function if there exists a non-empty convex set $S$ such that $\overline{S} \subset \Lambda$ and the following hold: 
\begin{enumerate}[label = (\roman*), topsep=0pt,itemsep=-1ex,partopsep=1ex,parsep=1ex]
\item $f(x)$ is continuous, strictly convex on $\overline{S}$, and has continuous partial derivatives in $S$.
\item For every $\alpha \in \mathbb{R}$, the partial level sets $\displaystyle L_1^f(y, \alpha) := \{ x \in \overline{S} : D_f(x,y) \le \alpha \} $ and $\displaystyle L_2^f(x, \alpha) := \{ y \in S : D_f(x,y) \le \alpha \} $ are bounded for all $x \in \overline{S}, y \in S$. 
\item If $y_n \in S$ and $\displaystyle \lim_{n \to \infty} y_n = y^*$, then $\displaystyle \lim_{n \to \infty} D_f(y^*,y_n) = 0$.
\item If $y_n \in S$, $x_n \in \overline{S}$, $\displaystyle \lim_{n \to \infty} D_f(x_n,y_n) = 0$,  $y_n \to y^*$, and $x_n$ is bounded, then $x_n \to y^*$.
\end{enumerate} 
We denote the family of Bregman functions by $\mathcal{B}(S)$. We refer to $S$ as the zone of the function and we take the closure of the $S$ to be the domain of $f$. Here $\overline{S}$ is the closure of $S$. 
\end{defn}

This class of function includes many natural objective functions, including entropy  $f(x) = -\sum_{i=1}^n x_i \log(x_i)$ with zone $S = \mathbb{R}^n_{+}$ (here $f$ is defined on the boundary of $S$ by taking the limit) and $f(x) = \frac{1}{p} \|x\|_p^p$ for $p \in (1, \infty)$. The $\ell_p$ norms for $p=1,\infty$ are not Bregman functions but can be made Bregman functions by adding a quadratic term. That is, $f(x) = c^Tx$ is a not Bregman function, but $c^Tx + x^TQx$ for any positive definite $Q$ is a Bregman function. 

\begin{defn} We say that a hyper-plane $H_i$ is \emph{strongly zone consistent} with respect to a Bregman function $f$ and its zone $S$, if for all $y\in S$ and for all hyper-planes $H$, parallel to $H_i$ that lie in between $y$ and $H_i$, the Bregman projection of $y$ onto $H$ lies in $S$ instead of in $\overline{S}$. 
\end{defn} 

In addition to the restrictions on the functions for which we can solve Problem \ref{problem:linear}, we will need the assumption that all hyper-planes in $\mathcal{H}$ (our family of half spaces) are strongly zone consistent with respect to $f(x)$. This assumption is used to guarantee that when we do a projection the point we project onto lies within our domain.  This is also not too restrictive. For example, all hyper-planes are strongly zone consistent with respect to the objective functions $f(x) = 0.5\|x\|^2$ and $f(x) = -\sum_i x_i\log(x_i)$. The final assumption, that we mentioned earlier, is that $C$ is non-empty. This is needed to ensure the algorithm converges. 

\subsubsection{General Convex Constraints}
\label{sec:general-convex-constraints}
While general convex constraints do not often appear in practical problems, such a formulation is of theoretical interest. When our constraints are simply convex sets rather than linear ones, we must adjust our algorithmic approach and, as a result, our assumptions about the objective function $f(x)$ and the constraints will be slightly different as compared to the linear case. The problem that we are interested in is
\begin{equation}
\label{problem:convex}
\begin{array}{ll@{}ll}
\text{minimize}  &f(x)  \\
\text{subject to}& x \in C_i &\ \ \  \forall \, C_i.\\
\end{array}
\end{equation}

Before we state our assumptions, we need the following definitions. 

\begin{defn} A function $f$ is Legendre, if $f$ is a closed proper map, $int(dom f) \neq \emptyset$, and $ \lim_{t \downarrow 0} \langle \nabla f(x+t(y-x)), y-x\rangle = -\infty$ for all $x \in \partial(dom f)$ and for all $y \in int(dom f)$. \end{defn}

\begin{defn} A function $f$ is strictly convex, if $f$ is twice differentiable everywhere and its Hessian is positive definite everywhere. \end{defn}
\begin{defn} A function $f$ is co-finite if $\lim_{r \to \infty} f(rx)/r = \infty$ for all $x \in dom f$. \end{defn}

For this version of the problem we will assume that $f$ is a strictly convex, co-finite, Legendre function. Note that these are strong conditions and rule out some objective functions such as Burg's entropy $f(x) = \sum_{i} \log(x_i)$. They do still, however, allow a rich class of functions. Some examples can be seen below:
\begin{enumerate}
\item $f(x) = 0.5 \|x\|^2$ on $\mathbb{R}^n$.
\item $f(x) = \sum_i x_i\log(x_i) - x_i$ (Boltzman/Shannon entropy) on $\mathbb{R}^n_+$.
\item $f(x) = - \sum_i\sqrt{1-x_i^2}$ (Hellinger distance) in $[-1,1]^n$.
\item $f(x) = \sum_i x_i \log(x_i) + (1-x_i)\log(1-x_i)$ (Fermi/Dirac entropy) on $[0,1]^n$.
\item $f(x) = \begin{cases} \frac{1}{2}x^2 +2x + \frac{1}{2} & x \le -1 \\ -1 - \log(-x) & -1 \le x < 0 \end{cases}$.
\end{enumerate}

Note that the last example Legendre function is not a Bregman function. In addition to the Legendre function assumptions, we also assume that $int(dom f) \cap C \neq \emptyset$. See \cite{dykstra} for a discussion comparing the two different classes functions presented. 

\subsection{Metric Constrained Problems}

The primary motivation of this paper is to solve metric constrained problems and in this section we set up such problems.
Prior work such as \cite{veldt, Brickell2008TheMN} define a metric constrained problem as follows. Let $x \in \mathbb{R}^{n \choose 2}$ so that the entries are indexed as $x_{ij}$ for $i < j$. Let $f: \mathbb{R}^{n \choose 2} \to \mathbb{R}$ be the function to optimize. Then a metric constrained problem, as defined in prior work, is the following optimization problem. 
\begin{equation}
\label{problem:metric}
\begin{array}{ll@{}ll}
\text{minimize}  &f(x)  \\
\text{subject to}& \forall\ i<j<k, \ x_{ij} \le x_{ik} + x_{jk} 
\end{array}
\end{equation}
In this paper, however, we will generalize this problem. 

To define general metric constrained problems, we first define the metric polytope. 

\begin{defn} Let $\text{MET}_n \subset \R^{\binom{n}{2}}$ be the space of all pseudo-metrics on $n$ points. Given a graph $G$ the metric polytope $\text{MET}_n(G)$ is the projection of $\text{MET}_n$ onto the coordinates given by the edges of $G$ (i.e., we consider distances only between pairs of points that are adjacent in $G$). \end{defn}

It can be easily seen that for any $x \in \mathbb{R}^{\binom{n}{2}}$, $x \in \text{MET}_n(G)$ if and only if $\forall\, e\in G,\, x(e) \ge 0$ and for every cycle $\mathcal{C}$ in $G$ and  $\forall\, e \in \mathcal{C}$, we have that  \[ x(e) \le \sum_{\tilde{e} \in \mathcal{C}, \tilde{e} \neq e} x(\tilde{e}).\] Therefore, $\text{MET}_n(G)$ can be described as the intersection of exponentially many half-spaces. 

\begin{rem} It is important to note that $\text{MET}_n$ is the space of \emph{all} metrics on $n$ points. Hence, when we optimize over $\text{MET}_n$ (or over $\text{MET}_n(G)$), we are optimizing over a much larger and more complex space than the space of Euclidean metrics or the space of all Mahalanobis metrics. 
\end{rem}

Now that we have the set over which we want to optimize, we give a general formulation for metric constrained optimization problems.
\begin{defn}
Given a strictly convex function $f$, a graph $G$, and a finite family of half-spaces $\mathcal{H} = \{H_i\}$ such that $H_i = \{ x : \langle a_i , x\rangle \le b_i \}$, we seek the unique point $x^* \in \bigcap_i H_i \cap \text{MET}(G) =: C$ that minimizes $f$. That is, if we set $A$ to be the matrix whose rows are $a_i$ and $b$ be the vector whose coordinates are $b_i$ we seek
\begin{equation}
\label{problem:metric}
\begin{array}{ll@{}ll}
\text{minimize}  &f(x)  \\
\text{subject to}& Ax \le b\\
& x \in MET(G).
\end{array}
\end{equation}
\end{defn}
The constraints encoded in the matrix $A$ let us impose additional constraints, beyond the metric constraints. For example, in correlation clustering, the matrix $A$ encodes $x_{ij} \in [0,1]$. In general, we will assume that the number of additional constraints encoded in $A$ (beyond the metric constraints) is relatively small so that the predominant difficulty in solving these optimization problems comes from the metric constraints. 

It is clear if $G = K_n$, then $\met_n = \met(K_n)$. Therefore, the metric constrained problems from \cite{veldt, Brickell2008TheMN} are simply special cases of our problem. There is, however, an equivalence beyond this as well when $f$ does not depend on some of the coordinates of $x$. For example, suppose $n = 3$, $x = (x_{12}, x_{13}, x_{23})$, and $f(x) = x_{12} + x_{23}$. Then, if we optimize $f$ over $\met_3$ or $\met(G)$ where $G$ is the path $1-2-3$, then the solutions are the same. This is formalized by the following proposition. 

\begin{prop} \label{prop:fiber} Let $f(x)$ be a function whose values only depends on the values $x_{ij}$ for $e= (i,j) \in G$ and consider the following constrained optimization problem. 
\begin{equation}
\begin{array}{ll@{}ll}
\text{minimize}  & f(x) & \\
\text{subject to}& x \in \text{MET}(K_n). \\
\end{array}
\label{problem:orig}
\end{equation}
Let $\pi$ be the projection from MET$(K_n)$ to MET$(G)$ and let $\tilde{f}(x) := f(\pi^{-1}(x))$. Then for any optimal solution $x^*$ to the following problem \begin{equation}
\begin{array}{ll@{}ll}
\text{minimize}  & \tilde{f}(x) & \\
\text{subject to}& x \in \text{MET}(G). \\
\end{array}
\label{problem:smaller}
\end{equation}
we have that for all $\hat{x} \in \pi^{-1}(x^*)$, $\hat{x}$ is an optimal solution to \ref{problem:orig}.
\end{prop}

\begin{proof} Here, we see that if $\tilde{x}$ is a minimizer of Problem \ref{problem:orig} and $x^*$ is the minimizer of Problem \ref{problem:smaller} then \[ f(\tilde{x}) = \tilde{f}(\pi(\tilde{x})) \ge \tilde{f}(x^*) = f(\pi^{-1}(x^*)), \] where the middle inequality is true, since $\pi(\tilde{x})$ need not be an optimal solution to Problem \ref{problem:smaller}. Thus, any element in $\pi^{-1}(x^*)$ is an optimal solution to Problem \ref{problem:orig}. \end{proof}

In these cases, we see that while we are solving the same problem, there are some practical differences. When $\met_n = \met(K_n)$, we are looking at two different representations of the same polytope; one with $O(n^3)$ constraints and the other with significantly more. In the case when $f$ is constant for some coordinates of $x$, we have another practical difference; that is, we compute $x^*$ the optimal point in $\met(G)$. It is, however, unclear how to pick the point $\hat{x} \in \pi^{-1}(x^*)$.

\subsection{Projections}

All of our algorithms will be based on iteratively computing Bregman projections.
\begin{defn} \label{defn:proj} Given a strictly convex function $f$, a closed convex set $Y$, and a point $y$, the projection of $y$ onto $Y$ with respect to $D_f$ is a point $x^* \in {\rm dom}(f)$ such that 
\[ x^* = \argmin_{x \in Y \cap {\rm dom}(f)} D_f(x,y). \] 
\end{defn}

In the case we have linear inequality constraints, we project onto the boundary of the half space $\partial H$. In this case, the Bregman projection has some additional special properties. 

\begin{lemma}\label{lem:bregman-facts} Let $x$ be the point that we project onto $\partial H_i = \{ y \in \R^d : \langle y, a_i \rangle = b_i\}$, then there exists a unique $x^*, \theta$ such that $\nabla f(x^*)= \nabla f(x) + \theta a_i$ and $\langle x^*, a_i \rangle = b_i$. This unique $x^*$ is also the Bregman projection of $x$ on $\partial H_i$. Furthermore,
\begin{enumerate}
\item $\theta > 0$ if and only if $\langle x, a_i \rangle > b_i$;
\item $\theta < 0$ if and only if $\langle x, a_i \rangle < b_i$;
\item $\theta = 0$ if and only if $\langle x, a_i \rangle = b_i$.
\end{enumerate} 
\end{lemma}

\section{Project and Forget: Linear Inequalities}
\label{sec:Algorithm-linear}

To set the stage for subsequent discussions, we present the general structure of our algorithm first and then detail adjustments we make for the different kinds of constraints (linear inequalities versus the more general convex constraints). It is iterative and, in general, will be run until some convergence criterion has been met. The convergence criterion depends largely on the specific application for which the algorithm is tailored. For this reason, we postpone the discussion of the convergence criterion until the applications section.

The \textsc{Project and Forget} algorithm keeps track of three quantities; $x^{(\nu)}$, the vector of variables over which we optimize, $L^{(\nu)}$ a list of the constraints that the algorithm deems active, and $z^{(\nu)}$ a vector of dual variables. Each iteration of the \textsc{Project and Forget} algorithm consists of three phases. In the first phase, we query our oracle $\mathcal{Q}$ to obtain a list of constraints $L$. In the second phase, we merge $L^{(\nu)}$ with $L$ to form $\tilde{L}^{(\nu+1)}$ and project onto each of the constraints in $\tilde{L}^{(\nu+1)}$ one at a time. When we do these projections, we update $x^{(\nu)}$ and $z^{(\nu)}$. Finally, in the third phase, we forget some constraints from $\tilde{L}^{(\nu+1)}$ to yield $L^{(\nu+1)}$.

\begin{algorithm}[!ht]
\caption{General Algorithm.}
\label{alg:bregman}
\begin{algorithmic}[1]
\Function{Project and Forget}{$f$ convex function}
    \State $L^{(0)} = \emptyset$, $z^{(0)} = 0$. Initialize $x^{(0)}$ so that $\nabla f(x^{(0)}) = 0$.
    \While{Not Converged}
        \State $L = \mathcal{Q}(x^{\nu})$
        \State $\tilde{L}^{{(\nu+1)}} = L^{(\nu)} \cup L$
        \State $x^{(\nu+1)}, z^{(n+1)} = $ Project($x^{(\nu)}, z^{(\nu)}, \tilde{L}^{(\nu+1)}$) 
        \State $L^{(\nu+1)} = $ Forget($z^{(\nu+1)}, \tilde{L}^{(\nu+1)}$) 
    \EndWhile
    \Return $x$
\EndFunction
\end{algorithmic}
\end{algorithm}

\subsection{Finding Violated (Metric) Constraints}
\label{sec:findcosntraints}

The first step of the method is to find violated constraints and in this subsection we detail how to find violated metric constraints in particular (which are a special case of linear inequality constraints). In many applications, we could do this by searching through the list of constraints until we found a violated constraint. However, in our case, since $\text{MET}_n(G)$ has exponentially many faces, we cannot list all of them, so we seek an efficient separation oracle $\mathcal{Q}$. That is, given a point $x$, the oracle efficiently return a list $L$ of violated constraints, such that the constraints in $L$ satisfy some properties. We will assume that $\mathcal{Q}$ satisfies either the Property \ref{prop:sep} or Property \ref{prop:rand}. 

\begin{algorithm}[!ht]
\caption{Finding Metric Violations.}
\label{alg:apsp}
\begin{algorithmic}[1]
    \Function{\textsc{Metric Violations}}{$d$}
	\State $L = \emptyset$
	\State Let $d(i,j)$ be the weight of shortest path between nodes $i$ and $j$ or $\infty$ if none exists. 
	\For{Edge $e = (i,j) \in E$}
    	\If{$w(i,j) > d(i,j)$}
        		\State Let $P$ be the shortest path between $i$ and $j$
        		\State Add $C = P \cup \{(i,j)\}$ to $L$
   	\EndIf
	\EndFor
	\Return $L$
\EndFunction
\end{algorithmic}
\end{algorithm}

For metric constrained problems, Algorithm \ref{alg:apsp} finds violated constraints. If the metric constrained problem has additional constraints (i.e $Ax \le b$), then we augment our oracle accordingly. 

\begin{prop} \label{prop:metricoralce} \textsc{Metric Violation} runs $\Theta(n^2\log(n) + n|E|)$ time and satisfies Property \ref{prop:sep} with $\phi(y) = \frac{y}{n^{1.5}}$. \end{prop}

\begin{proof} The first step in \textsc{Metric Violation} is to calculate the shortest distance between all pairs of nodes. This can be done using Dijkstra's algorithm in $\Theta(n^2\log(n) + n|E|)$ time.  Then, if the shortest path between any adjacent pair of vertices is not the edge connecting them, then the algorithm has found a violated cycle inequality.  Note that if no such path exists, then all cycle inequalities have been satisfied and the input point $x$ (representing distances) is within the metric polytope. Thus, we have an oracle that separates the polytope.  

However, we want an oracle that also satisfies property \ref{prop:sep}. To that end, let us define the deficit of a constraint. Given a point $x$ and a hyper-plane $H_{\mathcal{C},e}$, defined by some cycle $\mathcal{C}$ and an edge $e$, the deficit of this constraint is given by
\[ 
    d(\mathcal{C},e) =  x(e) - \sum_{\tilde{e} \in \mathcal{C}, \tilde{e} \neq e} x(\tilde{e}). 
\] 
If this quantity is positive, then $x$ violates this constraint. In this case, the squared distance from $x$ to this constraint is $\frac{d(\mathcal{C},e)^2}{|\mathcal{C}|}$ (i.e., we add/subtract $\frac{d(\mathcal{C},e)}{|\mathcal{C}|}$ to each edge weight of $\mathcal{C}$).

Now let $x_{{\rm apsp}}$ be the all pair shortest path metric obtained from $x$ and $\mathcal{L}$ be the list returned by the oracle. Then
\[
	\|x-x_{{\rm apsp}}\|_2^2 = \sum_{\mathcal{C}, e \in \mathcal{L}} d(\mathcal{C},e)^2.
\]
Thus, if $H_{\tilde{\mathcal{C}},\tilde{e}}$ is the constraint that maximizes $d(\tilde{\mathcal{C}},\tilde{e})$, then we have that 
\[
	\dist(x,\tilde{C}_{\tilde{\mathcal{C}},\tilde{e}})^2 = \frac{d(\tilde{\mathcal{C}},\tilde{e})^2}{|\tilde{\mathcal{C}}|} \ge \frac{\|x-x_{{\rm apsp}}\|_2^2 }{|\tilde{\mathcal{C}}||\mathcal{L}|}.
\]
Since our oracle returns at most 1 constraint per edge, we have that $|\mathcal{L}| \le |E| \le n^2$. This along with the fact that $|\tilde{\mathcal{C}}| \le n$, gives us that 
\[
	\dist(x,H_{\tilde{\mathcal{C}},\tilde{e}})^2 \ge \frac{\|x-x_{{\rm apsp}}\|_2^2 }{n^3}.
\]
\noindent Finally, we know that $x_{{\rm apsp}} \in \met(G)$. Thus, we see that 
\[
	\|x_{{\rm apsp}} - x\|_2^2 \ge \dist(x,\met(G))^2 =  \dist(x,C)^2.
\]
\noindent Putting it all together, we have that 
\begin{align*}
	\max_{\hat{C} \in \mathcal{L}} \dist(x,\hat{C})^2 &\ge \dist(x,H_{\tilde{\mathcal{C}},\tilde{e}})^2 \\
	&\ge \frac{\|x-x_{{\rm apsp}}\|_2^2 }{n^3} \\
	&\ge \frac{\dist(x,C)^2}{n^3}.
\end{align*}
Taking the square root of both sides gives the needed result. 
\end{proof}

This oracle is the reason we can use the version of the metric polytope with exponentially many constraints rather than the one with cubically many constraints. This oracle runs in sub-cubic time in many cases; hence, we find violated constraints quickly. Note the oracle also returns at most $O(n^2)$ constraints. Thus, if we have sub-cubically many active constraints, this version does many fewer projections.  

\subsection{Project and Forget Steps} 
\label{sec:projectandforget}

The Project and Forget steps for the algorithm are presented in Algorithm \ref{alg:pf1}. Let us step through the code to obtain an intuitive understanding of its behavior. Let $H_i = \{ x : \langle a_i, x \rangle \le b_i\}$ be a constraint and $x$ the current iterate. The first step is to calculate $x^*$ and $\theta$. Here $x^*$ is the projection of $x$ onto the boundary of $H_i$ and $\theta$ is a ``measure'' of how far $x$ is from $x^*$. In general, $\theta$ can be any real number and so we examine two cases: $\theta$ positive or negative. 

It can be easily seen from Lemma \ref{lem:bregman-facts} that $\theta$ is negative if and only if the constraint is violated. In this case, we have $c = \theta$ because (as we will see in proof) the algorithm always maintains $z_i \ge 0$. Then on line 5, we compute the projection of $x$ onto $H_i$. Finally, since we corrected $x$ for this constraint, we add $|\theta|$ to $z_i$. Since each time we correct for $H_i$, we add to $z_i$, we see that $z_i$ stores the total corrections made for $H_i$. On the other hand, if $\theta$ is positive, this constraint is satisfied. In this case, if we also have that $z_i$ is positive; i.e., we have corrected for $H_i$ before and we have over compensated for this constraint. Thus, we must undo some of the corrections. If $c = z_i$, then we undo all of the corrections and $z_i$ is set to 0. Otherwise, if $c= \theta$ we only undo part of the correction.  

For the Forget step, given a constraint $H_i \in \tilde{L}^{(\nu+1)}$, we check if $z_i^{(\nu+1)} = 0$. If so, then we have not done any net corrections for this constraint and we can forget it; i.e., delete it from $\tilde{L}^{(\nu+1)}$. 

If we think of $L^{(\nu)}$ as matrix, with each constraint being a row, we see that at each iteration $L^{(\nu)}$ is a sketch of the matrix of active constraints. Hence, during each iteration we update this sketch by adding new constraints (rows). During the Forget step, we determine which parts of our sketch are superfluous and we erase (forget) these parts (rows) of the sketch.

\begin{algorithm}[!ht]
\caption{Project and Forget algorithms.}
\label{alg:pf1}
\begin{algorithmic}[1]
\Function{\textsc{Project}}{$x,z,L$}
    \For{ $H_i = \{ y: \langle a_i,y \rangle = b_i\} \in L$}
    	\State Find $x^*, \theta$ by solving $\nabla f(x^*) - \nabla f(x) = \theta a_i \text{ and } x^* \in H_i$
        \State  $c_i = \min\left(z_i,\theta \right)$
        \State $x \leftarrow x_{new}$, $x_{new}$ $\leftarrow$ such that $\nabla f(x_{new}) - \nabla f(x) = c_i a_i$
        \State $z_i \leftarrow z_i - c_i$
    \EndFor
    \Return $x$, $z$ 
\EndFunction
\Function{\textsc{Forget}}{$z,L$}
    \For{ $H_i = \{ x: \langle a_i,x \rangle = b_i\} \in L$}
        \If{$z_i == 0$}  Forget $H_i$ 
        \EndIf
    \EndFor
    \Return $L$
\EndFunction
\end{algorithmic}
\end{algorithm}

In general, calculating the Bregman projection (line 3) may not have a closed form formula. See \citet{Dhillon2007MatrixNP} for a general method to perform the calculation on line 3. For example, if  $f(x) = x^T Q x + r^T x + s$ where $Q$ positive definite, then for a given hyper-plane $\langle a, x \rangle = b$ and a point $x$ we have that 
\begin{equation} \label{eq:quadratic}
\theta = \frac{\langle a,x\rangle - b}{a^T Q^{-1} a}.
\end{equation}

\subsection{Truly Stochastic Variant} 
\label{sec:stochastic}

In some problems, we have constraints defined using only subsets of the data points and we may not have an oracle that satisfies Property \ref{prop:sep}. For such cases, we present a stochastic version of our algorithm. Instead of calling \textsc{Metric Violation} or an oracle with Property \ref{prop:sep}, we want an oracle with Property \ref{prop:rand}. This version of our algorithm is very similar to the algorithms presented in \cite{Nedic2011RandomAF, Wang2015RandomMP}. The major difference being that we do not need to perform a gradient descent step. Instead, we maintain the KKT conditions by keeping track of the dual variables and doing dual corrections. In practice, using \textsc{Project and Forget} with the random oracle tends to produce better results than~\cite{Nedic2011RandomAF,Wang2015RandomMP} because we remember the active constraints that we have seen, instead of hoping that we sample them.

In some cases, we may want a more stochastic variant. With the algorithm as specified, we have to keep track of the constraints that we have seen and carefully pick which constraints to forget. We can, nevertheless, modify the Forget step to forget all constraints and obtain a truly stochastic version of the algorithm. In this version, at each iteration, we choose a random set of constraints and project onto these constraints only, independently of what constraints were used in previous iterations. We cannot, however, forget the values of the dual variables. This version is similar to that in \cite{article}. However, \citet{article} only looks at the problem when we have linear equality constraints. To employ such an approach, we could modify our problem to add slack variables and change all of constraints into equality constraints, however these modifications will not yield an equivalent problem. One of the major assumptions of \cite{article} is that the objective function is strictly convex. Thus, we if add slack variables, then we would need to modify our objective function to be strictly convex on these variables as well. This changes the problem.  

\subsection{Convergence Analysis: Linear Inequality Constraints} 
\label{sec:convergence-linear} 

Before we can use \textsc{Project and Forget}, it is crucial to establish a few theoretical properties. Previous work on the convergence of the Bregman method relies on the fact that the algorithm cyclically visits all of the constraints. For our method, however, this is not the case and so it is not clear that the convergence results for the traditional Bregman method still apply. Fortunately, the proofs for the traditional Bregman method can be adapted in subtle ways, so that we can establish crucial theoretical properties of the \textsc{Project and Forget} algorithm.  

\begin{thm} \label{thm:linear} 
If $f \in \mathcal{B}(S)$, $H_i$ are strongly zone consistent with respect to $f$, and $\exists\, x^0 \in S$ such that $\nabla f(x^0) = 0$, then 
\begin{enumerate} 
\item \label{part:1prime} If the oracle $\mathcal{Q}$ satisfies property \ref{prop:sep} (property \ref{prop:rand}), then any sequence $x^n$ produced by the above algorithm converges (with probability $1$) to the optimal solution of problem \ref{problem:linear}. 
\item  \label{part:3prime} If $x^*$ is the optimal solution, $f$ is twice differentiable at $x^*$, and the Hessian $H := H f(x^*)$ is positive definite, then there exists $\rho \in (0,1)$ such that  
\begin{equation} \label{eq:conv}
\lim_{\nu \to \infty } \frac{\|x^* - x^{\nu+1} \|_H}{\|x^* - x^\nu \|_H} \le \rho  
\end{equation} 
where $\|y\|_H ^2 = y^THy$. In the case when we have an oracle that satisfies property \ref{prop:rand}, the limit in \ref{eq:conv} holds with probability $1$. 
\end{enumerate} 
\end{thm}

\noindent The proof of Theorem \ref{thm:linear} also establishes another important theoretical property.  
\begin{prop} \label{prop:active} 
If $a_i$ is an inactive constraint, then there exists an $N$, such that for all $n \ge N$, we have that $z^n_i=0$. That is, after some finite time, we never project onto inactive constraints ever again. 
\end{prop}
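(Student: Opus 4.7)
The plan is to combine the convergence $x^{(\nu)}\to x^*$ from Theorem \ref{thm:linear}(\ref{part:1}) with the update logic of the Project step. Because $H_i=\{x:\langle a_i,x\rangle\le b_i\}$ is inactive, there is $\epsilon>0$ with $\langle a_i,x^*\rangle\le b_i-2\epsilon$, so convergence yields $N_1$ such that $\langle a_i,x^{(\nu)}\rangle\le b_i-\epsilon$ for all $\nu\ge N_1$ (almost surely in the stochastic case). Using the Fej\'er-type monotonicity of Bregman projections with respect to $D_f(x^*,\cdot)$ with the dual-variable correction that appears in Dykstra-style analyses---the very lemma driving the proof of Theorem \ref{thm:linear}---the intermediate iterates $\tilde x$ produced along a Project sweep also converge to $x^*$ uniformly over the sweep as $\nu\to\infty$.

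Given this uniform control of $\tilde x$, the scalar $\theta$ computed on line~3 of Algorithm \ref{alg:pf1} for constraint $H_i$ is continuous in $\tilde x$ and strictly positive at $\tilde x=x^*$, since $H_i$ is slack by $2\epsilon$ there and the sign convention noted after Algorithm \ref{alg:pf1} makes $\theta^*>0$. Hence there exist $\delta>0$ and $N_2\ge N_1$ with $\theta\ge\delta$ whenever $H_i$ is processed at any iteration $\nu\ge N_2$. The Project step then sets $c_i=\min(z_i,\theta)\ge\min(z_i,\delta)$, so $z_i$ either strictly decreases by at least $\delta$ or remains $0$ (when $z_i=0$ already, we get $c_i=0$ and no change). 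Because $z_i^{(N_2)}$ is finite, $z_i$ reaches $0$ after at most $\lceil z_i^{(N_2)}/\delta\rceil$ processings of $H_i$; regardless of whether the oracle re-inserts $H_i$ afterwards, $z_i$ stays at $0$ thereafter, and the Forget step removes $H_i$ from the active list. Taking $N$ to be this iteration yields the claim.

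The main obstacle is the uniform convergence of the intermediate iterates $\tilde x\to x^*$ along a Project sweep: Theorem \ref{thm:linear} only guarantees convergence of the outer sequence $x^{(\nu)}$, but $\theta$ is determined by the partially updated iterate at the moment $H_i$ is visited inside the inner loop of Algorithm \ref{alg:pf1}, which could in principle be far from $x^{(\nu)}$. To close this gap I would isolate the per-projection Fej\'er inequality for $D_f(x^*,\cdot)$ (with the corresponding dual corrections) from the convergence proof of Theorem \ref{thm:linear} and apply it to every prefix of the Project sweep, thereby controlling $\tilde x$ uniformly. Once that lemma is in hand the remainder of the argument reduces to continuity of $\theta$ and the simple counting bound above; in the stochastic setting the same reasoning runs verbatim on the probability-one event that $x^{(\nu)}\to x^*$.
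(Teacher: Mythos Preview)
Your argument is correct but takes a different route from the paper. The paper's proof is a two–line contradiction embedded in Step~4 of the proof of Theorem~\ref{thm:linear}: suppose $z_i^n\neq 0$ infinitely often for an inactive constraint $a_i$. Whenever $z_i$ remains nonzero immediately \emph{after} a processing of $H_i$, the update necessarily took $c=\theta$ (since $c=z_i$ would zero it out), and by line~5 of Algorithm~\ref{alg:pf1} the resulting iterate lands exactly on the hyperplane $\partial H_i$. Hence infinitely many terms of the projection-indexed sequence $x^n$ lie on $\partial H_i$; since $x^n\to x^*$, the closed set $\partial H_i$ contains $x^*$, contradicting $\langle a_i,x^*\rangle<b_i$.

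Your approach instead bounds $\theta\ge\delta>0$ near $x^*$ and drives $z_i$ down by at least $\delta$ per visit. This is sound, but it costs you the continuity of $\theta$ in the current iterate (which needs an implicit-function argument in the non-quadratic case) and the quantitative counting step, neither of which the paper needs. Note also that the ``main obstacle'' you flag---convergence of the intermediate, projection-indexed iterates $\tilde x\to x^*$---is not an extra lemma to be extracted: the paper's proof of Theorem~\ref{thm:linear} is already carried out at the level of the fine sequence $x^n$ (one index per projection), and its final line establishes $x^n\to x^*$ for that sequence. So the Fej\'er-prefix argument you sketch is unnecessary; you may simply cite the full-sequence convergence. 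With that simplification your proof is complete, just longer than the paper's contradiction.
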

 
\begin{cor} \label{cor:dualconverge}
Under the assumptions for part \ref{part:3prime} of Theorem \ref{thm:linear}, the sequence $z^n \to z^*$ also converges. 
\end{cor}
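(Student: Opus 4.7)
The plan is to deduce convergence of the dual sequence from the already established convergence of the primal sequence, via a conserved quantity that links them. My first step would be to establish, by induction on the inner updates of \textsc{Project}, the invariant
\begin{equation*}
\nabla f(x^{(\nu)}) + A^T z^{(\nu)} = 0 \qquad \text{for all } \nu \ge 0,
\end{equation*}
where $A$ is the matrix whose rows are the constraint normals $a_i$. This follows because the two updates in lines 5--6 of Algorithm \ref{alg:pf1} move $\nabla f(x)$ by $c_i a_i$ while moving $z_i$ by $-c_i$, so the quantity $\nabla f(x) + \sum_i z_i a_i$ is preserved under each inner update; the initialization $\nabla f(x^{(0)})=0$, $z^{(0)}=0$ supplies the base case.

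Next I would push this invariant to the limit. Part \ref{part:3} of Theorem \ref{thm:linear} gives $x^{(\nu)} \to x^*$ at a linear rate in $\|\cdot\|_H$, and since the Hessian exists at $x^*$, $\nabla f$ is (locally) Lipschitz around $x^*$, so $\nabla f(x^{(\nu)}) \to \nabla f(x^*)$ and in fact $\{\nabla f(x^{(\nu)})\}$ is a Cauchy sequence whose differences decay at the same linear rate. Combined with the invariant, this shows
\begin{equation*}
A^T z^{(\nu)} \to -\nabla f(x^*)
\end{equation*}
with Cauchy control on the tail. At this stage Proposition \ref{prop:active} enters: there is some $N$ beyond which $z^{(\nu)}_i = 0$ for every inactive constraint, so for $\nu \ge N$ the vector $z^{(\nu)}$ is supported on the active index set $\mathcal{A}$, and the convergence above becomes $A_\mathcal{A}^T z^{(\nu)}_\mathcal{A} \to -\nabla f(x^*)$.

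Finally I would argue that $z^{(\nu)}_\mathcal{A}$ itself, not merely its image under $A_\mathcal{A}^T$, is Cauchy. Under the standard constraint qualification that the active gradients $\{a_i : i\in\mathcal{A}\}$ are linearly independent at $x^*$, the matrix $A_\mathcal{A}^T$ has a bounded left inverse, so
\begin{equation*}
\|z^{(\nu)}_\mathcal{A} - z^{(\mu)}_\mathcal{A}\|
\;\le\; \|(A_\mathcal{A} A_\mathcal{A}^T)^{-1} A_\mathcal{A}\|\cdot\|\nabla f(x^{(\nu)}) - \nabla f(x^{(\mu)})\|,
\end{equation*}
which decays at the linear rate from Theorem \ref{thm:linear}, yielding a limit $z^*_\mathcal{A}$, and setting $z^* = 0$ off $\mathcal{A}$ gives the claimed $z^{(\nu)} \to z^*$. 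The pair $(x^*,z^*)$ then satisfies the KKT conditions by construction.

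The main obstacle I expect is precisely the last step: without a constraint qualification one only obtains convergence of $A^T z^{(\nu)}$, not of $z^{(\nu)}$ itself, and active gradients arising from cycles in $\text{MET}_n(G)$ are typically highly dependent. Handling this case would require leveraging the algorithm's specific dynamics — for instance, using that each $c_i$ is driven by $\nabla f(x^{(\nu)}) - \nabla f(x^*)$ (so the per-iteration changes in $z^{(\nu)}$ are summable along the linear-rate tail provided by Theorem \ref{thm:linear}), showing directly that $\{z^{(\nu)}\}$ is Cauchy coordinate-wise rather than going through $A^T$. This is where I would spend the bulk of the technical effort.
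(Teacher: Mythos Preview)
The paper does not spell out a proof of this corollary; it is stated as a byproduct of the proof of Theorem~\ref{thm:linear}, so your task is really to assemble it from those ingredients. Your first step, the invariant $\nabla f(x^{(\nu)})+A^Tz^{(\nu)}=0$, is exactly Step~1 of the paper's convergence proof, and your use of Proposition~\ref{prop:active} to restrict attention to the active index set $\mathcal{A}$ is likewise the intended tool. Up to that point you are aligned with the paper.

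The genuine issue is the one you flag yourself: inverting $A_\mathcal{A}^T$ requires a linear-independence constraint qualification, which is \emph{not} among the hypotheses of part~\ref{part:3} of Theorem~\ref{thm:linear} and is typically false for cycle constraints in $\text{MET}_n(G)$. So your primary route dies, and the paper gives you no extra assumption to revive it. What you call your fallback is therefore the actual proof, and it does work. Once Proposition~\ref{prop:active} has fired, every constraint touched in iteration $\nu$ is active, so for such a constraint the projection parameter satisfies $|\theta|\le C\,\|x-x^*\|$ at the current inner iterate (use~\eqref{eq:quadratic} in the quadratic case, or the second-order expansion underlying Theorem~\ref{thm:op} in general); the inner iterates are nonexpansive toward $x^*$ once only active hyperplanes are in play, so $|\theta|\le C\,\|x^{(\nu)}-x^*\|_H$; and $|c_i|\le|\theta|$ in every case. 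Hence $\|z^{(\nu+1)}-z^{(\nu)}\|\le C'\,\|x^{(\nu)}-x^*\|_H$, which is summable by the linear rate~\eqref{eq:conv}, and $z^{(\nu)}$ is Cauchy. I would restructure the write-up to lead with this argument rather than present it as a contingency after the LICQ route.
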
 

These properties are important as they permit the following interpretation of our algorithm. The algorithm spends the initial few iterations identifying the active constraints from amongst a large number of constraints. This is the active set part of the algorithm. The algorithm then spends the remainder of the iterations finding the optimal solution with respect to these constraints. Empirically, we notice this phenomenon as well. At first, the error metrics decreases very slowly, while the number of constraints that are being considered grows rapidly. Eventually, we reach a point when the number of constraints that we are currently considering stabilizes, at this point the error metrics start decreasing very rapidly. An example of this phenomenon can be seen in Figure \ref{fig:error}. This behavior is one of the major advantages of our method. Additionally, the ability to find the set of active constraints without having to solve the problem is another advantage of our algorithm. 

\begin{rem} \label{rem:facet} We note that while these results show that the algorithm converges linearly, $\rho$ is close one. Indeed, the proof bounds  $\rho \le \frac{F}{F+1}$, where $F$ is the number of hyperplanes that the optimal solution lies on. From a heuristic perspective, it is beneficial to use only those hyperplanes that define the facets of the constraint polytope. \end{rem}

For the truly stochastic case, we have the following theorem instead.

\begin{thm} \label{thm:linear-stochastic} 
If $f \in \mathcal{B}(S)$, $H_i$ are strongly zone consistent with respect to $f$, and $\exists\, x^0 \in S$ such that $\nabla f(x^0) = 0$, then with probability $1$ any sequence $x^n$ produced by the above truly stochastic algorithm converges to the optimal solution of problem \ref{problem:linear}. Furthermore, if $x^*$ is the optimal solution, $f$ is twice differentiable at $x^*$, and the Hessian $H := H f(x^*)$ is positive semi-definite, then there exists $\rho \in (0,1)$ such that with probability $1$, 
\begin{equation}
\liminf_{\nu \to \infty } \frac{\|x^* - x^{\nu+1} \|_H}{\|x^* - x^\nu \|_H} \le \rho. \label{eq:conv2}
\end{equation} \end{thm}

Because the proofs of Theorem~\ref{thm:linear} and \ref{thm:linear-stochastic} are quite technical and involve two different types of separation oracles, we split them into several parts. In Subsections~\ref{sec:proof11} and~\ref{sec:proof12}, we prove the first part of Theorem~\ref{thm:linear} for separation oracles with property~\ref{prop:sep} and~\ref{prop:rand}, respectively. In Subsection~\ref{sec:proofpart2}, we prove the second part of Theorem~\ref{thm:linear} (also subdividing this proof into several cases). Finally, in Subsection \ref{sec:proof-linear-stochastic} we prove Theorem \ref{thm:linear-stochastic}.

\section{Project and Forget: General Convex Constraints}

In the previous section, we detailed the \textsc{Project and Forget} algorithm for half-space (or linear inequality) constraints. In this section, we use similar idea to develop the appropriate variations for general convex constraints. We draw inspiration from Dijkstra's method and use a slightly different set of assumptions on our objective functions and constraints. These assumptions are detailed in Section \ref{sec:general-convex-constraints}.

\subsection{Algorithm}

Let $x^n$ be the primal sequence of iterates and $q^n$ an auxiliary sequence. Let $P_k$ denote the Bregman projection operator onto the $k$th constraint set. Let $f^*$ be the convex conjugate of $f$. Let $i(k)$ denote the control sequence, and let $p(c,k) = \argmax_{k' < k} i(k') = c$. We will abbreviate $p(i(k),k)$ as $p(k)$. With this notation established, the \textsc{Project and Forget} algorithm for the case of general convex constraints is shown in Algorithm \ref{alg:bgreman-general}.

\begin{algorithm}[!ht]
\caption{Project and Forget algorithms}
\label{alg:bgreman-general}
\begin{algorithmic}[1]
\State Initialize $q^0  = 0$, $L = \emptyset$
\Function{Project}{$x_0 = x,q,L$}
	\For{$C_i \in L$}
		\State$x^{n} := (P_{i} \circ \nabla f^*)(\nabla f(x^{n-1}) + q^{p(n)})$ \label{line:1}
		\State $q^n = \nabla f(x^{n-1}) + q^{p(n)} - \nabla f(x^n) $ \label{line:2}
	\EndFor
	\Return $x^{|L|}$
\EndFunction
\Function{Forget}{$x,q,L$}
	\For{$C_i \in L$}
		\If{$q^{p(i,n)} == 0$} Forget $C_i$ 
		\EndIf
	\EndFor
	\Return $L$
\EndFunction
	
\end{algorithmic}
\end{algorithm}

Lines \ref{line:1} and \ref{line:2} of the above algorithm come from Dijkstra's method. To understand their role, note that since $f$ is Legendre, we have that $\nabla f^* = (\nabla f)^{-1}$. Thus, on line \ref{line:1}, we are perturbing $x^{n-1}$ by modifying its gradient with the auxiliary variable $q^{p(n)}$. For example, if $f(x) = x^TQ x$ for some positive definite matrix $Q$, then line \ref{line:1} would be 
\[ 
	x^{n} = P_i(Q^{-1}(Qx^{n-1} + q^{p(n)})) = P_i (x^{n-1} + Q^{-1} q^{p(n)}). 
\]

\subsection{Convergence Analysis}

Now that we have defined the above algorithm, the second major theoretical result of this paper is the following. 

\begin{thm} \label{thm:convergence-general} If $f$ is a closed very strictly convex, co-finite, Legendre function and we are given a point $x^0 \in dom f$, then 
\begin{enumerate} 
\item if the oracle $\mathcal{Q}$ satisfies either property \ref{prop:sep} or \ref{prop:rand}, then any sequence $x^n$ produced by the above algorithm converges to Bregman projection of $x^0$ on $C$ with respect to $f$; and,
\item if $f$ is also a quadratic function, then for large enough $\nu$ there exists a $\rho \in (0,1)$ such that  \[ \|x^* - x^{\nu+1} \| \le \rho \|x^* - x^\nu \|. \] 
\end{enumerate} 
\end{thm}

\section{Applications: Metric Constrained Problems} 
\label{sec:applications}

To demonstrate the effectiveness of our method in solving metric constrained problems, we solve large instances of two different types of such problems: metric nearness and correlation clustering. We focus on these types of problems first as they were the original motivation behind our work. \footnote{All implementations and experiments can be found at \url{https://github.com/rsonthal/ProjectAndForget}.} 

\subsection{Metric Nearness} The first and simplest form of a metric constrained problem is the metric nearness problem. Following~\citet{Brickell2008TheMN}, the metric nearness problem is: 
\begin{quote} given a point $x \in \R^{\binom{n}{2}}$, find the closest (in some $\ell_p$ norm) point $x^* \in$ MET$_n$ to $x$. 
\end{quote}
This problem is a form of metric learning; see \citet{Brickell2008TheMN} for an application to clustering and see ~\citet{Gilbert2018UnsupervisedML} for an application to unsupervised metric learning. Additionally, if we further restrict to finding the closest Euclidean metric, then this problem is a well studied one. See  \cite{emn1,emn2}, and \cite{emn3} for examples of this problem. Recently this problem has also been looked at from a discrete setting. \cite{GilbertJain2017} and \cite{Fan2018MetricVD} looked to solve the problem with the fewest possible changes. Following this, \cite{Fan2018GeneralizedMR} generalized the problem to finding the closest point in $\met(G)$ instead of $\met_n$.

We use $\ell_2$ version of this problem to demonstrate that standard solvers have significant drawbacks on large scale metric constrained problems while \textsc{Project and Forget} handles these problems easily. In particular, in addition to comparing against the cyclic Bregman used in \citet{Brickell2008TheMN}, we compare against commercial solvers CPLEX \citep{cplex} and Mosek \citep{mosek}, ADMM based solvers OSQP \citep{osqp}, SCS \citep{scs} and COSMO \citep{cosmo}, operator splitting and interior point solvers Ipopt \citep{ipopt}, ProxSDP \citep{proxsdp} and ECOS \citep{ecos}, and active set solver SLSQP \citep{slsqp}. We also use Mosek as an active set solver (MASS) as follows. We go through all $O(n^3)$ constraints and find the subset $S$ of the violated constraints. We then use Mosek to solve the problem on $S$. We then remove the inactive constraints and add in the new violated constraints and use Mosek to solve the problem again and iterate.

\subsubsection{Experimental Set Up:}

Before we see the experimental results, let us look at the experimental set up more closely. 

\textbf{Data.} For this experiment, we will generate three different types of synthetic data. We will refer to these as Type I, Type II, and Type III data. 

For Type I data, we generate random weighted complete graphs with Gaussian weights.  For Type II data, for each edge $e$ we set $w(e) = 1$ with probability $0.8$ and and $w(e) = 0$ with probability $0.2$. For Type III data, we let $u_{ij}$ be sampled from the uniform distribution on $[0,1]$ and $v_{ij}$ from a standard normal, then the weight for an edge $e=ij$ is given by \[ w_{ij} = \left\lceil 1000\cdot u_{ij} \cdot v_{ij}^2 \right\rceil. \]

\textbf{Implementation Details.} We implemented the algorithm from \cite{Brickell2008TheMN}. We made a small modification that improves the running time. In \cite{Brickell2008TheMN}, it is recommended that we store the dual variable $z$ as a sparse vector. However, as we do not want the overhead of handling sparse vectors, we store $z$ as a dense vector. 

\textsc{Project and Forget}, as presented Algorithm \ref{alg:imp1}, was implemented with two modifications. As we can see from Algorithm \ref{alg:apsp}, when the oracle finds violated constraints, it looks at each edge in $G$ and then decides whether there is a violated inequality with that edge. It is cleaner, in theory, to find all such violated constraints at once and then do the project and forget steps. It is, however, much more efficient in practice to do the project and forget steps for a single constraint as we find it. This approach also helps cut down on memory usage. Once our oracle returns a list of constraints (note we have already projected onto these once), we project onto our whole list of constraints again. Thus, for the constraints returned by the oracle, we project onto these constraints twice per iteration. Note this does not affect the convergence results for the algorithm. 

\begin{algorithm}[!htbp]
\caption{Pseudo-code for the implementation for Metric Nearness.}
\label{alg:imp1}
\begin{algorithmic}[1]
 \State $L^0 = \emptyset$, $z^0 = 0$. Initialize $x^0$ so that $\nabla f(x^0) = 0$.
    \While{Not Converged}
	\State Let $d(i,j)$ be the weight of shortest path between nodes $i$ and $j$ or $\infty$ if none exists. 
	\State $L = \emptyset$
	\For{Edge $e = i(,j) \in E$}
		\If{$w(i,j) > d(i,j)$}
			\State Let $P$ be the shortest path between $i$ and $j$.
			\State Let $C = P \cup \{(i,j)\}$. 
			\State Project onto $C$ and update $x,z$.
			\If{$z_C != 0$}
				\State Add $C$ to $L$.
			\EndIf
		\EndIf
	\EndFor
        \State $\tilde{L}^{\nu+1} = L^\nu \cup L$
        \State $x^{\nu+1}, z^{\nu+1}$ $=$ Project($x^\nu, z^{\nu}, \tilde{L}^{\nu+1}$) 
        \State $L^{\nu+1}$ $=$ Forget($\tilde{L}^{\nu+1}$) 
    \EndWhile
    \Return $x$
\end{algorithmic}
\end{algorithm}

The solvers CPLEX, Mosek, OSQP, SCS, COSMO, Ipopt, ProxSDP, ECOS, and MASS all were accessed via Julia's JuMP library \citep{JuMP}. SLSQP was inferfaced with using Scipy. 

\textbf{Convergence criterion.} When we presented the algorithm in Section \ref{sec:Algorithm-linear}, we did not specify a convergence criterion because we wanted the criterion to be application dependent. The convergence criterion for \textsc{Project and Forget} and \citet{Brickell2008TheMN} is as follows. One variant of the metric nearness problem is the decrease only variant, in which we are not allowed to increase the distances and must only decrease them. This problem can solved in $O(n^3)$ time by calculating the all pairs shortest path metric \citep{GilbertJain2017}. Given $x^n$ as input, let $\hat{x}^n$ be the optimal decrease only metric. For \textsc{Project and Forget}, and \cite{Brickell2008TheMN}, we ran these experiments until $\|\hat{x}^n - x^n\|_2 \le 10^{-10}$. For convenience, given $x^n$ let $D(x^n) = \|\hat{x}^n - x^n\|_2$. Note that $D(x^n)$ is an upper bound for how far the current iterate is from the polytope. We can use the information returned by our oracle to compute $D(x^n)$ in quadratic time, as our oracle computes $\hat{x}^n$. For these experiments, however, at the end of each iteration, we rerun the computation. We do this for fairness, as this extra computation has to be added to Bregman's cyclic method.

For CPLEX. Mosek, OSQP, SCS, COSMO, Ipopt, ProxSDP, ECOS, SLSQP, we let the convergence criterion be the default criterion. For MASS, we ran it until the constraint set converged. 

\textbf{Comparison Statsitics} We compare the solvers on two different kinds of measures. The first is the convergence details of each solver. To determine convergence, we look at two statistics. First, we look at the relative objective difference ($ROD$), which is given by 
\[
	ROD = \frac{\text{OtherSolverObjective - ProjectForgetObjective}}{\text{ProjectForgetObjective}}.
\]  
Second, if $x$ is the solution returned by \textsc{Project and Forget} and $\tilde{x}$ is the solution returned by one of the other solvers, then the feasibility difference ($FD$) is given by 
\[
	FD = D(\tilde{x}) - D(x).
\]
For both metrics, if the quantities are positive, then \textsc{Project and Forget} does better.

The second measure we compare is the time taken to solve the optimization problem. \emph{Here we use the solve time reported by the solvers. Note that this does not include the interface time and in many cases the interface time could be significant}.

\subsubsection{Results}

As we can see from Table \ref{table:mn}, standard solvers run out memory or start taking too long extremely rapidly as a function of instance size. In fact, all times reported for the standard solvers is the solve time returned by the optimizer and does not include the interface time. In many cases, such as ProxSDP, the interface time could be multiple hours. On the other hand, while initially \cite{Brickell2008TheMN} is faster than \textsc{Project and Forget} as $n$ gets larger, \textsc{Project and Forget} starts to dominate. Thus, showing that \textsc{Project and Forget} is the only viable algorithm to solve large metric constrained problems. 

\begin{table*}[!htbp]
\centering
\begin{tabular}{c|cccccccccc}
\toprule
Algorithm &  \multicolumn{10}{c}{Number of Nodes}  \\
 & 100 & 200 & 300 & 400 & 500 & 600 & 700 & 800 & 900 & 1000 \\
\midrule
Ours & 13.5 & 32.7 & 85.1 & 170 & \cellhi 271 & \cellhi 458 & \cellhi 720 & \cellhi 983 &\cellhi 1356 & \cellhi 1649  \\
Cyclic Bregman & \cellhi 1.77 & \cellhi 10.5 &\cellhi 47.1 &\cellhi 141 & 322 & 558 & 910 & 1472 & 2251 & 3167 \\
Mosek & 11.7 & 542 & \multicolumn{8}{l}{Out of Memory} \\
SCS & 1632 & 19466 & \multicolumn{8}{l}{Timed Out} \\
OSQP & 64.5 & 3383 & \multicolumn{8}{l}{Timed Out}  \\
ProxSDP & 353 & 684 & \multicolumn{8}{l}{Timed Out}  \\
Ipopt & 2792 &  \multicolumn{9}{l}{Timed Out}  \\
ECOS & 597 & \multicolumn{9}{l}{Timed Out} \\
CPLEX &  \multicolumn{10}{l}{Out of Memory}  \\
SLSQP &  \multicolumn{10}{l}{Timed Out}  \\
COSMO &  \multicolumn{10}{l}{Timed Out}  \\ 
\bottomrule
\end{tabular}
\caption{Table comparing \textsc{Project and Forget} against a variety of different solvers to solve the Metric Nearness problem for Type 1 graphs in terms of time taken in seconds. All experiments were run on a Computer with 52 GB of memory. All times reported are averaged over 5 instances.}
\label{table:mn}
\end{table*}

To make sure that we are running all of the algorithms to the same level of convergence, we take the five best solvers and check their convergence statistics. That is, we check the convergence statistics for commercial solver Mosek, for ADMM based solvers OSQP and SCS, and for operator splitting solver ProxSDP.  

Each data point in Table \ref{table:convergence} is average over 10 trials. When comparing against \cite{Brickell2008TheMN}, we let $n$ range from a 100 to 1000, i.e., the same values as in Table \ref{table:mn}. Table \ref{table:convergence} then shows that both method have similar levels of convergence. Thus, since \textsc{Project and forget} is faster; it is the superior algorithm. 

To compare against the rest of the solvers, as they simply cannot solve the problem for larger values of $n$, we let $n$ range from $10$ to $100$. Here, we see that \textsc{Project and Forget} has much smaller feasibility error than the other solvers. Solvers such as \textsc{SCS} and \textsc{QSQP} have a feasibility error of about $10^{-3}$ to $10^{-5}$, however, we run \textsc{Project and Forget} until the feasibilty error is smaller than $10^{-10}$, which is many orders of magnitude smaller. For ProxSDP, $ROD$ was consistently around 1 and $FD$ was consistently greater than $2$. We conclude that these solvers have not converged. 

The only solver, other than \textsc{Project and Forget}, that consistently converges is Mosek, which has higher objective values than \textsc{Project and Forget} in all but one case. The active set method MASS was run until the the feasibility error was at most 1e-10 or the constraint set was stable, in practice we found that the constraint set stabilized first and hence the solver has similar convergence statistics to Mosek. 

\begin{table*}
\setlength{\tabcolsep}{1pt} 
    \centering
    \begin{tabular}{cc|cccccccccc}
    \toprule
     & & \multicolumn{10}{c}{$n$} \\
    Solver & & $100$ & $200$ & $300$ & $400$ & $500$ & $600$ & $700$ & $800$ & $900$ & $1000$ \\
    \midrule
   Cyclic Bregman & $ROD$ & -3e-13 & -4e-14 & 4e-15 & -2e-14 & 6e-15 & -8e-16 & 1e-15 & -4e-15 & 5e-15 & 2e-17\\
    & $FD$ & -6e-12 & 5e-13 & 4e-13 & -3e-12 & 1e-12 & 3e-12 & 5e-12 & 3e-12 & -2e-12 & 7e-12   \\
    \midrule
    & & \multicolumn{10}{c}{n} \\
     & & $10$ & $20$ & $30$ & $40$ & $50$ & $60$ & $70$ & $80$ & $90$ & $100$ \\
    \midrule
       Mosek & $ROD$ & -4e-4 & 2e-9 & 1e-9 & 6e-9 & 7e-9 & 1e-8 & 2e-8 & 2e-8 & 2e-8 & 2e-8\\
        & $FD$ & 2e-11 & 2e-11 & 1e-9 & 7e-10 & 3e-10 & 9e-10 & 1e-9 & 9e-10 & 8e-10 & 7e-10\\
        \midrule
       OSQP & $ROD$ & -4e-4 & 5e-7 &-9e-7 & -2e-7 & -7e-8 & 1e-7 & -1e-8 & 7e-8 & 4e-8 & 5e-8\\
        & $FD$ & 1e-3 & 2e-3 & 1e-3 & 9e-4 & 8e-4 & 5e-4 & 9e-4 & 1e-3 & 8e-4 & 1e-3\\
        \midrule
         SCS & $ROD$ & -4e-4 & -3e-8 & 8e-7 & -8e-6 & -5e-7 & -1e-8 & -2e-8 & 3e-9 & -7e-6 & 1e-9\\
        & $FD$ & 7e-5 & 1e-4 & 5e-3 & 6e-3 & 5e-4 & 9e-5 & 4e-5 & 4e-5 & 0.09 & 4e-5\\
        \bottomrule
    \end{tabular}
    \caption{Convergence statistics for the metric nearness problem for the different solvers. Each value is the average over 10 trials.}
              \label{table:convergence}
\end{table*}

As we can see, the cyclic Bregman method is the one that has the closest performance to \textsc{Project and Forget}. Therefore, we want to take a closer look at how ROD and FD evolve for the two algorithms. Figure \ref{fig:mn-internal}, shows a plot for both of these statistics as a function of number of projections. Note we did not compute these quantities after every projection but only after each iteration of the algorithm. As we can see from Figure \ref{fig:mn-internal}, \textsc{Project and Forget} uses far fewer projections. In fact, \textsc{Project and Forget} uses fewer projections than the cyclic Bregman method does in one iteration. Thus, highlighting the great advantage obtained by the forget step.  

Figure \ref{fig:mn-internal} also highlights another aspect of the algorithm from its analysis in Proposition \ref{prop:active}. That is, the algorithm spends the first few iterations finding the correct set of constraints. Once it has found the correct set of constraints, it converges to the correct solution very quickly (with an exponential decay in the error). This is seen in Figure \ref{fig:mn-internal}, where sometime between $10^6$ and $10^7$ projections, the algorithm seems to have found the correct active set of solutions.

\begin{figure}
    \centering
    \subfigure[Plot for Relative Objective Difference (ROD) versus number of projections for \textsc{Project and Forget} and Bregman's cyclic method]{\includegraphics[width = 0.49\linewidth]{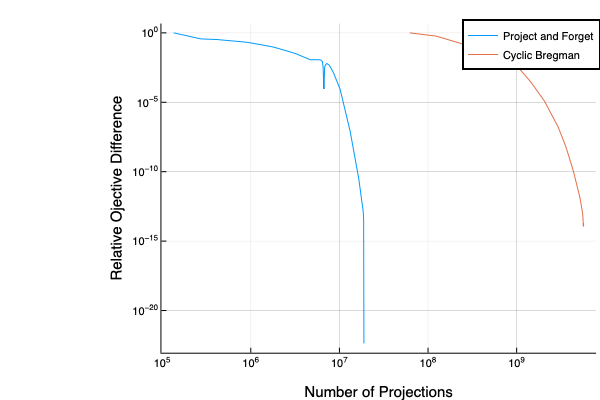}}
    \subfigure[Plot for Feasibility Difference (FD) versus number of projections for \textsc{Project and Forget} and Bregman's cyclic method]{\includegraphics[width = 0.49\linewidth]{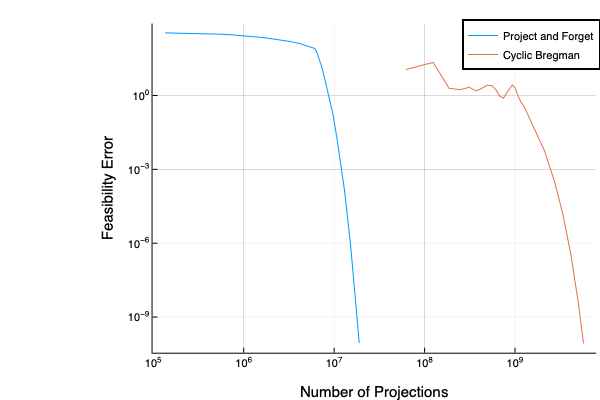}}
    \caption{Plot for ROD and FD versus number of projections for \textsc{Project and Forget} and Bregman's cyclic method. }
    \label{fig:mn-internal}
\end{figure}

We further tested our method against the cyclic Bregman method on Type II and Type III data. In this case, we got the running times shown in Figure \ref{fig:metricnearnessappendix}. For this we relaxed the convergence criteria to being within 1 of the closest decrease only metric solution. One thing we learn from relaxing the convergence criteria is that cyclic Bregman method outperforms \textsc{Project and Forget} for larger values of $n$. This is because, that \textsc{Project and Forget} only focuses on the set of the active constraints. Thus, the more stringent the convergence criterion, the better our method does compared to the standard cyclic Bregman method used in \cite{Brickell2008TheMN}. 

\begin{figure}[!htbp]
\centering\hfill
\subfigure[Type one graphs]{\includegraphics[width=0.3\linewidth]{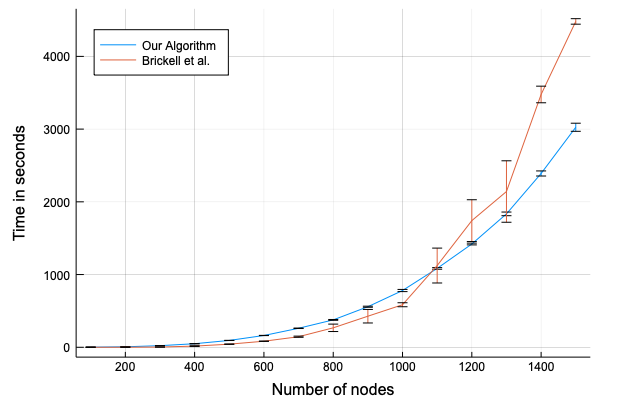}}\hfill
\subfigure[Type two graphs]{\includegraphics[width=0.3\linewidth]{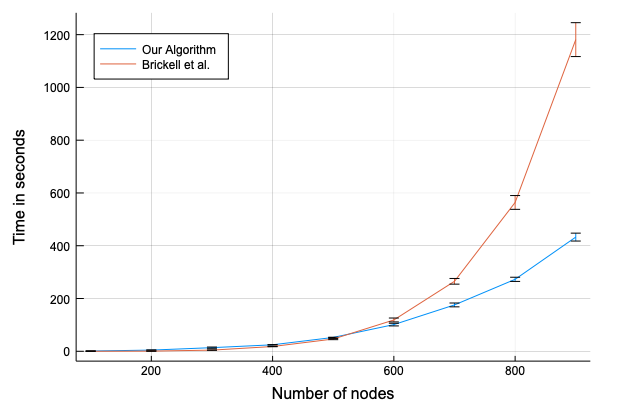}}\hfill
\subfigure[Type three graphs]{\includegraphics[width=0.3\linewidth]{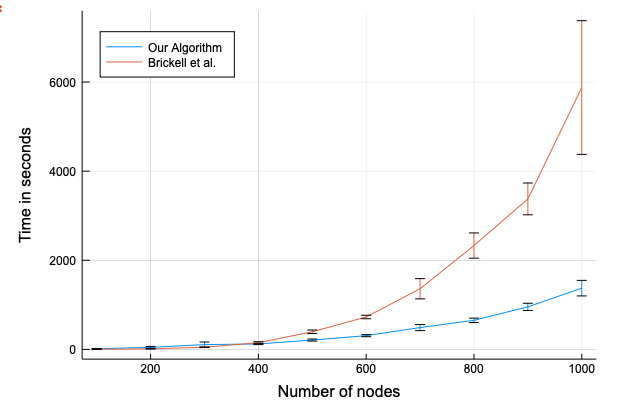}}\hfill
\caption{The red line is the mean running time for the algorithm from \cite{Brickell2008TheMN}. The blue line is the running mean time for our algorithm. All computations were done on a machine with 4 physical cores, each with 13 GB of RAM. }
\label{fig:metricnearnessappendix}
\end{figure}


Finally, we must note that when we use the cycle constraints to represent $\met(K_n)$, there is significant dependency amongst the constraints. That is, many constraints are not independent from other constraints. We may suspect that our oracle finds a lot of active constraints that have linear dependencies and so $L^{(\nu)}$ becomes saturated with dependent constraints. It is surprising, however, that after we solve the problem, $L^{(\nu)}$, more often than not, only contains 3-cycle constraints and does not have a lot of dependency. We also see that the size of $L^{(\nu)}$ is about $O(n^2)$.

\subsection{Weighted Correlation Clustering on General Graphs.} 

For the correlation clustering problem, we are given a graph $G = (V,E)$ (not necessarily complete) and two non-negative numbers $w^+(e)$ and $w^-(e)$ for each edge $e$. These numbers indicate the level of similarity and dissimilarity between the end points of $e$. The goal of correlation clustering is to partition the nodes into clusters so as to minimize some objective function. The most common objective is $ \sum_{e\in E}w^+(e)x_e + w^-(e) (1-x_e) $, where $x_e \in \{0,1\}$ indicates whether the end points of the edge $e$ belong to different clusters. This problem is NP-hard and many different approximation algorithms and heuristics have been developed to solve it. The best approximation results \citep{corOpt1, corOpt2}, however, are obtained by rounding the solution to the following relaxed linear problem 
\begin{equation} 
\begin{array}{ll@{}ll}
\label{problem:CC}
\text{minimize}  &  \sum_{e\in E}w^+(e)x_e + w^-(e) (1-x_e) & \\
\text{subject to}& x_{ij} \le x_{ik} + x_{kj} \ \ \ \ \  \ \  i,j,k=1 ,..., n\\
                 &                                                x_{ij} \in [0,1] \ \ \ \ \ \ \ \ \ \ \  \ \ i,j=1 ,..., n.
\end{array}
\end{equation}
Special cases, such as when the weights are $\pm1$ and $G = K_n$, have faster algorithms for the same approximation ratio \citep{Ailon2008AggregatingII}.

The LP formulation for correlation clustering in Equation~\ref{problem:CC} has $\Theta(n^3)$ constraints. Hence, solving the LP for large $n$ becomes infeasible quickly in terms of both memory and time. \citet{veldt} showed that for instances with $n \approx 4000$, standard solvers such as Gurobi ran out of memory on a 100 GB machine. On the other hand, \citet{veldt} developed a method using which they can feasibly solve the problem for $n \approx 11000$. 
To do so, they transform Problem \ref{problem:CC} into Problem \ref{problem:CCtransformed}. To do this transformation, for $e\in E$, we define $ \tilde{w}(e) = |w^+(e) - w^{-}(e)|$. For $e \not\in E$, we let $\tilde{w}(e) = 0$ Then $W$ is a diagonal matrix whose entries are given by $\tilde{w}$. Then, we define $d$ as follows 
\[ 
    d_{e} = \begin{cases} 1 & w^{-1}(e) > w^+(e) \\ 0 & \text{otherwise} \end{cases}.
\]
And the transformed problem is as follows. 
\begin{equation}
\label{problem:CCtransformed}
\begin{array}{ll@{}ll}
\text{minimize}  &  \tilde{w}^T |x- d| + \frac{1}{\gamma} |x - d|^T W |x-d|. & \\
\text{subject to}& x \in \text{MET}(K_n). \\
\end{array}
\end{equation}
For general $\gamma$, the solution to Problem \ref{problem:CCtransformed} approximates the optimal solution to \ref{problem:CC}. However, for large enough $\gamma$ it has been shown that the two problems are equivalent \citep{veldt}. As we will see for our instances, Problem \ref{problem:CCtransformed} is at most a $2$ approximation to problem \ref{problem:CC}, which is an $O(\log(n))$ approximation of the NP-hard correlation clustering problem.

Finally, we also relax the condition that $x \in \text{MET}_n$ to $x \in \text{MET}(G)$. The formulation of the LP that we solve is as follows:

\begin{equation}
\begin{array}{ll@{}ll}
\text{minimize}  & \tilde{w}^T f + \frac{1}{\gamma} f^T \cdot W \cdot f& \\
\text{subject to}& x \in \text{MET}(G) \\
& f_{ij} = |x_{ij} - d_{ij}|, \ \ \ & (i,j) \in E. \\
\end{array}
\label{problem:final}
\end{equation}

\noindent While it seems like we have relaxed the problem, however, since our objective function does not depend on the value of $x$ for the edges not in $G$ we see that the two problems are equivalent due to Proposition \ref{prop:fiber}.  

\begin{rem} When \citet{veldt} tested their solver against Gurobi, they did so in an active set manner. That is, they found a set of violated constraints, fed this set into Gurobi, and solved the problem with this subset of the constraints. They then took the solution from Gurobi and found constraints that the current solution violated and added these constraints and repeated. They did this until Gurobi solved the problem. We view this convoluted process as yet more evidence for standard active set methods not being feasible at large scales.
\end{rem}

\subsubsection{Experimental Set up}

Before looking at the results, let us see the experimental set up. 

\textbf{Data.} We solve the problem for two different types of graphs; dense and sparse. 

For dense graphs, we use four graphs from the Stanford sparse network repository. Then, following \citet{veldt}, we use the method from \citet{Wang2013ASA} to convert these graphs into instances of weighted correlation clustering on the complete graph. We compare our method against \citet{veldt2}, a parallel version of \cite{veldt}, in terms of running time, quality of the solutions, and memory usage. 

For much real-world data, the graph $G$ is larger than our previous experiments but is also sparse. Since the weighting of the edges does not affect the size of the linear program that needs to be solved, we tested our algorithm on sparse signed graphs to get an estimate of the running time for the algorithm. The two graphs used for this experiment are much bigger instances than our previous experiments and have 82,140 nodes and 131,828 nodes, respectively. 

\textbf{Implementation Details.} For the case when $G=K_n$, in addition to the modifications that were done for metric nearness experiment, we made two more modifications to \textsc{Project and Forget}. First, we did the project step and the forget step one additional time per iteration. Second, we parallelized the oracle by running Dijkstra's algorithm in parallel. The pseudo-code for this version of Algorithm \ref{alg:bregman} can be seen in Algorithm \ref{alg:imp2}. For the sparse version, we also used the parallel version of the oracle and during each iteration, but we did the project and the forget step 75 times per iteration.

\begin{algorithm}[!htbp]
\caption{Pseudo-code for the implementation for CC for the dense case.}
\label{alg:imp2}
\begin{algorithmic}[1]
 \State $L^0 = \emptyset$, $z^0 = 0$. Initialize $x^0$ so that $\nabla f(x^0) = 0$. 
 \State $L_a$ is the list of additional constraints. $z_a^0 = 0$ (dual for additional constraints)
    \While{Not Converged}
	\State Let $d(i,j)$ be the weight of shortest path between nodes $i$ and $j$ or $\infty$ if none exists. This is found using a parallel algorithm. 
	\State $L = \emptyset$
	\For{Edge $e = i(,j) \in E$}
		\If{$w(i,j) > d(i,j)$}
			\State Let $C = P \cup \{(i,j)\}$. Where $P$ be the shortest path between $i$ and $j$.
			\State Project onto $C$ and update $x,z$.
			\If{$z_C != 0$}  
			    \State $C$ to $L$.
			\EndIf
		\EndIf
	\EndFor
        \State $L^{\nu} \leftarrow L^\nu \cup L$
        \For{$i = 1,2$}
        		\State $x^{\nu}, z^\nu$ $\leftarrow$ Project($x^\nu, z^\nu, L^{\nu}$) 
		\State $L^{\nu}$ $\leftarrow$ Forget($L^{\nu}$) 
        \EndFor
        \State $x^{\nu}, z_a^\nu$ $\leftarrow$ Project($x^\nu, z_a^\nu, L_a$) 
        \State $x^{\nu+1} = x^{\nu}, L^{\nu+1} = L^\nu$, $z^{\nu+1} = z^\nu$, $z^{\nu+1}_a = z^\nu_a$, 
    \EndWhile
    \Return $x$
\end{algorithmic}
\end{algorithm}

Note that for both experiments, the additional constraints that were introduced due to the transformation were all projected onto once per iteration and never forgotten. The pseudo-code for this version can be seen in Algorithm \ref{alg:imp3}. We used the implementation provided by the authors of \cite{veldt} to run the experiments for their algorithm.

\begin{algorithm}[!htbp]
\caption{Pseudo-code for the implementation for CC for the sparse case.}
\label{alg:imp3}
\begin{algorithmic}[1]
 \State $L^0 = \emptyset$, $z^0 = 0$. Initialize $x^0$ so that $\nabla f(x^0) = 0$. 
 \State $L_a$ is the list of additional constraints. $z_a^0 = 0$ (dual for additional constraints)
    \While{Not Converged}
	\State Let $d(i,j)$ be the weight of shortest path between nodes $i$ and $j$ or $\infty$ if none exists. This is found using a parallel algorithm. 
	\State $L = \emptyset$
	\For{Edge $e = i(,j) \in E$}
		\If{$w(i,j) > d(i,j)$}
			\State Let $C = P \cup \{(i,j)\}$. Where $P$ be the shortest path between $i$ and $j$.
			\State Add $C$ to $L$.
		\EndIf
	\EndFor
        \State $L^{\nu} \leftarrow L^\nu \cup L$
        \For{$i = 1,\hdots, 75$}
        		\State $x^{\nu}, z^\nu$ $\leftarrow$ Project($x^\nu, z^\nu, L^{\nu}$) 
		        \State $L^{\nu}$ $\leftarrow$ Forget($L^{\nu}$) 
        \EndFor
        \State $x^{\nu}, z_a^\nu$ $\leftarrow$ Project($x^\nu, z_a^\nu, L_a$) 
        \State $x^{\nu+1} = x^{\nu}, L^{\nu+1} = L^\nu$, $z^{\nu+1} = z^\nu$, $z^{\nu+1}_a = z^\nu_a$, 
    \EndWhile
    \Return $x$
\end{algorithmic}
\end{algorithm}

\textbf{Calculating the approximation ratio.} Let $\hat{x}$ be the optimal solution to \ref{problem:CCtransformed}, then if we let $\displaystyle R = \frac{\hat{x}^T W \hat{x}}{2 \gamma \tilde{w}^T \hat{x}}$, by \cite{veldt}, we have that $\hat{x}$ is an $\displaystyle \frac{1+\gamma}{1+R}$ approximation to the optimal solution of \ref{problem:CC}. This is the formula we used to calculate the approximation ratios reported in Tables \ref{table:CC1} and \ref{table:large}. For our experiments we used $\gamma = 1$. Note that since the entries of $\hat{x}$ are non negative, and the entries of $W$ and $\tilde{w}$ are non-negative, we have that $R > 0$. Thus, our approximation ratio is at most 2.

\textbf{Convergence criterion.} We ran the experiment until the maximum violation of a metric inequality was at most 0.01. However, the two algorithms, \cite{veldt2} and ours, have different metric constraints. Specifically \cite{veldt2} only uses all constraints that come from 3 cycles, whereas we use all cycle constraints. Theoretically both sets of constraints define the same polytope, but practically there is a difference. Thus, in practice our algorithm was run to a slightly greater level of convergence than the one from \cite{veldt2}. Additionally, \cite{veldt2} also check a second criteria for convergence, however, in all cases, that criterion was satisfied much before the maximum violation of a metric inequality was at most 0.01.

\subsubsection{Results}

\begin{table*}[!htb]
\setlength{\tabcolsep}{4.0pt} 
\centering
\begin{tabular}{*8c}
\toprule
\multicolumn{2}{c}{Graph} &  \multicolumn{2}{c}{Time (s)} & \multicolumn{2}{c}{Opt Ratio} & \multicolumn{2}{c}{Avg. mem. / iter. (GiB)}  \\
\midrule
{} & n   & Ours   & \citeauthor{veldt2}    & Ours   & \citeauthor{veldt2} & Ours   & \citeauthor{veldt2} \\
CAGrQc & 4158 & \cellhi 2098 & 5577 & \cellhi 1.33 &  1.38 & 4.4& \cellhi 1.3 \\ 
Power & 4941 & \cellhi 1393 & 6082& \cellhi 1.33  &  1.37 & 5.9&\cellhi 2   \\ 
CAHepTh &8638 &  \cellhi 9660& 35021& \cellhi 1.33 &  1.36 & 24&\cellhi 8  \\ 
CAHepPh & 11204& \cellhi 71071 & 135568  & \cellhi 1.33 & 1.46  & 27.5&\cellhi 15 \\ 
\bottomrule
\end{tabular}
\caption{Table comparing \textsc{Project and Forget} against \citet{veldt2} in terms of time taken, quality of solution, and average memory usage when solving the weighted correlation clustering problem on dense graphs.}
\label{table:CC1}
\end{table*}

For the case of dense graphs, we see from Table \ref{table:CC1} that our algorithm takes less time to obtain a better approximation ratio, but requires more memory per iteration. Thus, demonstrating the superiority of our method in terms of CPU time.  Our algorithm requires more memory because the initial few iterations find a large number of constraints. Later, the algorithm forgets these constraints until the number of constraints stabilizes at a reasonable level. Hence, our initial memory usage is much larger than our later memory usage. To see how the number of constraints found by the oracle evolves, we plot the number of constraints found by the oracle and the number of constraints after the forget step in Figure \ref{fig:constraints}. As we can see, after the initial few iterations, the number constraints found sharply reduces and has found the true set of active constraints by the 15th iteration.  Figure \ref{fig:error} also shows us, as expected, \emph{the exponential decay of the maximum violation of a metric constraint.}

Figure \ref{fig:error}, also highlights another aspect our algorithm. That is, for the initial few iterations the error statistics do not decrease. In fact, we have experimentally seen that the error statistics may actually increase for the first few iterations. Proposition \ref{prop:active}, which tells us that our algorithm spends the initial few iterations finding the active constraint set, explains this phenomenon. During this time, the algorithm makes minimal progress towards reducing the error statistics. However, once we have found the active constraints, Theorem \ref{thm:linearIter} is now applicable and we have an exponential decay of the error statistics. Though in contrast to the theory result, the base of the exponent, is smaller than theoretically predicted. 

\begin{figure}[!htb]
\centering
\hfill
\subfigure[Number of constraints.]{\label{fig:constraints}\includegraphics[width=0.49\linewidth]{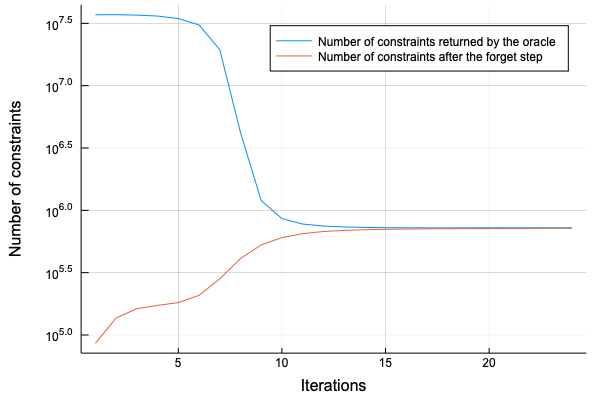}} \hfill
\subfigure[Max Violation.]{\label{fig:error}\includegraphics[width=0.49\linewidth]{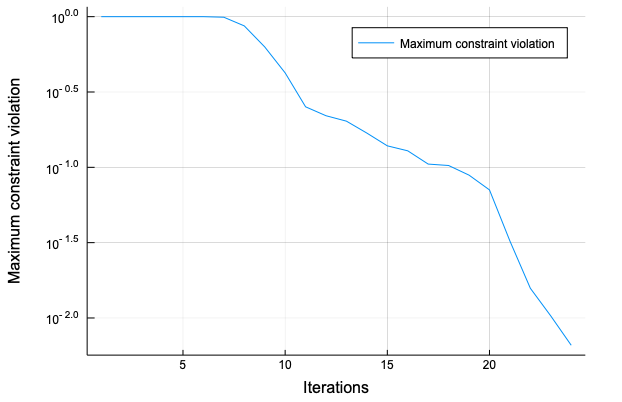}}\hfill
\caption{Plots showing the number of constraints returned by the oracle, the number of constraints after the forget step, and the maximum violation of a metric constraint when solving correlation clustering on the Ca-HepTh graph}
\end{figure}

\begin{rem}
Figure \ref{fig:constraints} highlights the crucial difference between our method and standard active set methods. Standard active set methods would have to initially solve the convex optimization problem with $10^8$ constraints. This would take a very long time. However, using \textsc{Project and Forget}, we only need to compute projections onto each constraint once before we can forget constraints. Thus, we forget constraints more frequently and much earlier. 
\end{rem}

\begin{table*}[!htb]
\setlength{\tabcolsep}{4.3pt} 
\centering
\begin{tabular}{*7c}
\toprule
Graph & $n$ & \# Constraints & Time & Opt Ratio & \# Active Constraints & Iters.\\ \midrule
Slashdot& 82140 & $5.54 \times 10^{14}$ & 46.7 hours &  1.78 & 384227 & 145    \\ 
Epinions &131,828 & $2.29 \times 10^{15}$ &  121.2 hours  & 1.77 &  579926   &   193   \\ \bottomrule
\end{tabular}
\caption{Time taken and quality of solution returned by \textsc{Project and Forget} when solving the weighted correlation clustering problem for sparse graphs. The table also displays the number of constraints the traditional LP formulation would have. }
\label{table:large}
\end{table*}

For sparse graphs, even if we use \citet{veldt2}, based on the average time it took for a single iteration for the CA-HepPh graph, it would take \citet{veldt2} an estimated two days to complete a single iteration, for a graph with $n \approx 80,000$. This is because each of their iterations takes $\Theta(n^3)$ time.  Since most graphs require at least 100 iterations, \citet{veldt,veldt2} cannot be used to solve problems of this magnitude. Other methods of solving the LP are also not feasible as they run out of memory on much smaller instances. 

We can see the performance of our method \textsc{Project and Forget} in Table \ref{table:large}. Here, we see that our method has solved these problems in a reasonable amount of time. As we can see from Table \ref{table:large}, these instances have over 500 trillion constraints, but the number of active constraints is only a tiny fraction of the total number of constraints. Thus, using our approach, we can solve the weighted correlation clustering problem on much larger graphs than ever before. 

There are two reasons as to why \textsc{Project and forget} can be used to solve these problems at these scales. First, while these instances have over 500 trillion constraints, the number of active constraints is only a tiny fraction of the total number of constraints. Second, due to the sparsity of the graph, our oracle finds violated cycle inequalities relatively quickly (sub-cubic time) and, since we forget inactive constraints, we project onto this relatively small number of constraints in each iteration. Thus, using our approach, we can solve the weighted correlation clustering problem on much larger (10 times bigger) graphs than ever before.

\section{Applications: General algorithm}

We shall now also provide experimental evidence that supports the generality of the algorithm. To do so we will solve three different problems, each with general linear constraints (not simply metric constraints) and a variety of objective functions. 

First, we solve the dual of the quadratically regularized optimal transport problem from \cite{blondel18a}. Solving this problem helps highlight various different aspects of our algorithm that are not highlighted by the metric constrained problem. Second, we solve the information theoretic metric learning from \cite{itml}. Solving this problem highlights the wide variety of objective functions that our method can handle. Finally, we use our algorithm to train $L_2$ SVMs to highlight cases when the truly stochastic variant of our algorithm would be useful. 

\subsection{Sparse Optimal Transport}

Optimal transport is a ubiquitous problem that appears many different areas of machine learning, data science, statistics, mathematics, physics, and finance. 

\begin{prob} \label{prob:mkopt} Given two measure spaces $(\gX, \Sigma_X, \mu)$ and $(\gY, \Sigma_Y, \nu)$, and a cost function $c: \gX \times \gY \to \R^+$ , the Monge-Kanterovich Optimal Transport seeks joint probability $\pi$ on $\gX \times \gY$ that minimizes 
\[
    \int_{\gX \times \gY} c(x,y) d\pi(x,y), 
\]
subject to constraint that the marginals of $\pi$ are equal to $\mu$ and $\nu$. 
\end{prob}
In the discrete setting this problem can be formulated as follows:
\begin{equation}
    \begin{array}{ll@{}ll}
    OT(\va,\vb) = \min \langle \mC, \mP \rangle \\
    \text{Subject to: }\ \va = \mP\vone_m, \, \vb = \mP^T\vone_n
    \end{array}
\end{equation}
Where $\mC$ is the cost matrix and $\mP$ is the transport matrix.
It has been shown that optimal solutions to the discrete Monge-Kanterovich problem are sparse \citep{brualdi_2006}. Specifically, at most $n+m-1$ entries of $P$ can be non-zero. 

While Problem \ref{prob:mkopt} can be formulated as a linear program, solving it is fairly challenging as it has a quadratic number of variables and so many alternative formulations have been proposed. One such alternative, called ROT, is presented in \cite{blondel18a}; it uses a different regularizer, most importantly an $L_2$ regularizer. In this case, they show experimentally that the solutions are still sparse. We show that \textsc{Project and Forget} can also be used to solve ROT. 

\begin{equation}
\begin{array}{ll@{}ll}
\label{problem:rot}
\text{Minimize:}  & \langle \mC, \mP \rangle + \gamma \|\va - \mP \vone_m\|^2 + \gamma\|\vb - \mP^T \vone_n\|^2 \\
\text{Subject to:} & \forall i \in [n], \forall j \in [m],\ \  \mP_{ij} \ge 0
\end{array}
\end{equation}

\subsubsection{Dual Problem}

In particular, we will not solve ROT directly; instead, we will solve the dual formulation of ROT. 

\begin{equation} 
\begin{array}{ll@{}ll}
\label{problem:drot}
&\text{Min:}  & \frac{1}{\gamma}\|\vf\|^2 + \frac{1}{\gamma}\|\vg\|^2 -  \langle \vf, \va \rangle - \langle \vg, \vb \rangle \\
&\text{Subject to: } & \vf_i + \vg_j \le \mC_{ij} &
\end{array}
\end{equation}

Generally, solving the dual problem, while it gives us the optimal value of the primal objective function, does not give you the optimal value of the primal variables directly. Thus, going from the solution to Problem \ref{problem:drot} to that of Problem \ref{problem:rot} is non trivial. This brings us to one aspect of \textsc{Project and Forget} that is not highlighted by the metric constrained problems. That is, \begin{quote} if we use \textsc{Project and Forget} to solve \textbf{either} the primal problem or the dual problem, then it finds the optimal solution to \textbf{both} problems.  \end{quote}

This result holds as a result of Proposition \ref{prop:active}. This proposition states that the variables $z$ converge. Since we maintain the KKT conditions throughout the algorithm, the value that $z$ converges to is the optimal solution of the dual problem and, thus, we solve the dual problem. 

\subsubsection{Sparsity}

We want to solve the dual problem instead of the primal because \textsc{Project and Forget} exploits sparsity to its great advantage. 
Specifically, the dual constraints are extremely sparse; each dual constraint involves two variables so we have rapid computation of the projection.

\subsubsection{Experimental Results}

In this subsection, we detail the experimental set up and present the results. 

\textbf{Data.} The experimental set up is as follows. We take two shifted Gaussian distributions with means $\pm 15$ and variance $10$. Then, we  split the interval $[-20,20]$ into $n$ points and create two discrete distributions by sampling the Gaussians on those $n$ points. We use the squared Euclidean distances between the points as the cost function. We set the regularization parameter $1/\gamma = 10^{-3}$. This set up is a very basic example of the optimal transport problem and is an important test example for algorithmic comparisons. 

\textbf{Convergence Details.} The feasibility error for Mosek and CPLEX are those reported by the solvers. For \textsc{Project and Forget}, we calculate the feasibility error by 
\[  \max_{i,j} \vf_i + \vg_j - \mC_{ij}.
\]

\textbf{Solvers.} For the oracle for the \textsc{Project and Forget}, we simply look through all the constraints and the return all of the violated constraints. This oracle clearly satisfies Property \ref{prop:sep}. We solve the dual version of the problem using \textsc{Project and Forget} (PF), Mosek, and CPLEX and the primal version of the problem using Mosek, CPLEX, LBFGSB, and projected gradient descent (PGD).

\textbf{Results.} Table \ref{table:opt} shows that \textsc{Project and Forget} can be used to solve the problem for much larger values of $n$. Additionally, for the smaller values of $n$, we see that \textsc{Project and Forget} is competitive in terms of solve time. One thing to note from Table \ref{table:conv} is that the difference between the Primal objective and the Dual objective for \textsc{Project and Forget} is smaller than $10^{-7}$; however, for Mosek, and CPLEX this is not the case. Thus, while both have similar levels of feasibility error, we see that our solver is still more ``converged'' since the primal dual gap is smaller.

\begin{table}[!htb]
\setlength{\tabcolsep}{6pt} 
\centering
\begin{tabular}{c|ccccc}
\toprule
Algorithm & $501$ & $1001$ & $5001$ & $10001$  & 20001 \\ \midrule
Poject and Forget & 12 &  \cellhi 151 &  1972 &  \cellhi 5909 &  \cellhi 21665 \\
Cyclic Bregman & 2681 & - & - & - & - \\
LBFGSB & 24 & 162 & 4080 & \multicolumn{2}{l}{Out of memory.} \\
Mosek dual & 56 & 328 & \cellhi 1927 & \multicolumn{2}{l}{Out of memory.} \\
Mosek primal &  \cellhi 5 & \multicolumn{4}{l}{Out of memory.} \\
CPLEX primal & 105 & \multicolumn{4}{l}{Out of memory.} \\
CPLEX dual & \multicolumn{4}{l}{Out of memory.} \\
PGD & \multicolumn{4}{l}{Did not converge.}\\
\bottomrule
\end{tabular}
\caption{Time taken in seconds to solve the quadratic regularized optimal transport problem. All experiments were run on a machine with 52 GB of RAM.}
\label{table:opt}
\end{table}
 
 \begin{table}[p]
     \centering
     \begin{tabular}{c|cccc}
     \toprule
      & \multicolumn{2}{c}{Objective} & \multicolumn{2}{c}{Feasibility Error} \\ \midrule
     Solver  & Dual & Primal & Dual & Primal \\ \midrule
     & \multicolumn{4}{c}{$n=501$} \\ \midrule
        Project and Forget  & 3.8416077 & 3.8416077 & \cellhi 1.7e-09 & \cellhi 0 \\
        Mosek Dual   & 3.8414023 & 3.8414023 & 3.8e-08 & 2.8e-10  \\
        LBFGSB  & n/a & 3.8416114 & n/a & \cellhi 0\\
        Mosek Primal  &  3.8303160 & 3.8303203  & 3.5e-8 & 2.2e-11 \\
        CPLEX Primal  & 3.8416376 &  3.8416076 & 8.44e-07 & 1.13e-04 \\
        CPLEX Dual  & \multicolumn{4}{c}{Ran out of memory} \\ \midrule
        & \multicolumn{4}{c}{$n=1001$} \\ \midrule
        Project and Forget & 1.947532046 & 1.947532046 & \cellhi 2.0e-8 &  \cellhi 0 \\
        Mosek Dual   & 1.947091229 & 1.947091203 & 2.7e-08 & 8.4e-11 \\
        LBFGSB  & n/a & 1.947548404 & n/a & \cellhi 0\\
        Mosek Primal  & \multicolumn{4}{c}{Ran out of memory} \\
        CPLEX Primal  & \multicolumn{4}{c}{Ran out of memory} \\
        CPLEX Dual  & \multicolumn{4}{c}{Ran out of memory} \\ \midrule
        
        & \multicolumn{4}{c}{$n=5001$} \\ \midrule
        Project and Forget  & 3.946556152e-01 & 3.946556154e-01 & 2e-08 & \cellhi 0 \\
        Mosek Dual& 3.880175376e-01 & 3.880175255e-01 & \cellhi 1.2e-08 & 7.4e-11 \\
        LBFGSB  & n/a & 3.947709104e-01 & n/a & \cellhi 0 \\
        Mosek Primal  & \multicolumn{4}{c}{Ran out of memory} \\
        CPLEX Primal  & \multicolumn{4}{c}{Ran out of memory} \\
        CPLEX Dual  & \multicolumn{4}{c}{Ran out of memory} \\ \midrule 
        & \multicolumn{4}{c}{$n=10001$} \\ \midrule
        Project and Forget & $0.197687348$ & $0.197687348$ & \cellhi 2.0e-8 & \cellhi $0$ \\
        Mosek Dual  & \multicolumn{4}{c}{Ran out of memory} \\
        LBFGSB  & \multicolumn{4}{c}{Ran out of memory} \\
        Mosek Primal  & \multicolumn{4}{c}{Ran out of memory} \\
        CPLEX Primal  & \multicolumn{4}{c}{Ran out of memory} \\
        CPLEX Dual  & \multicolumn{4}{c}{Ran out of memory} \\ \midrule 
        & \multicolumn{4}{c}{$n=20001$} \\ \midrule
        Project and Forget & $0.0989355070$ & $0.0989355070$ & \cellhi 2.0e-8 & \cellhi$0$ \\
        Mosek Dual  & \multicolumn{4}{c}{Ran out of memory} \\
        LBFGSB  & \multicolumn{4}{c}{Ran out of memory} \\
        Mosek Primal  & \multicolumn{4}{c}{Ran out of memory} \\
        CPLEX Primal  & \multicolumn{4}{c}{Ran out of memory} \\
        CPLEX Dual  & \multicolumn{4}{c}{Ran out of memory} \\ \midrule 
        \bottomrule
     \end{tabular}
     \caption{Table showing the convergence details for the various solvers. }
     \label{table:conv}
 \end{table}

Again, we look at how ROD and FD evolve. Figure \ref{fig:drot-internal} shows that \textsc{Project and Forget} does fewer projections overall than cyclic Bregman does in one iteration. This is the same as the result that we saw for metric nearness. Here, FD is measured as the true feasibility error.

\begin{figure}
    \centering
    \subfigure[Plot for ROD versus number of projections for \textsc{Project and Forget} and Bregman's cyclic method]{\includegraphics[width=0.49\linewidth]{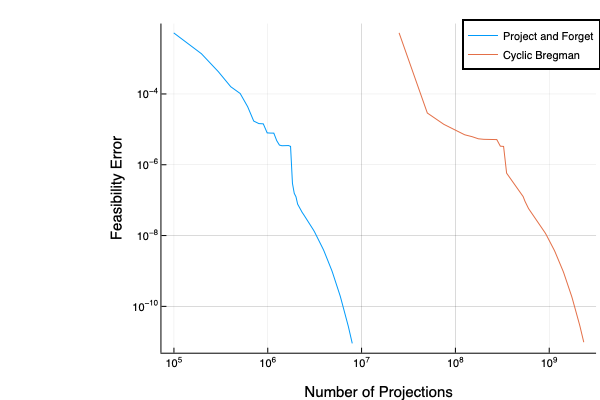}}
    \subfigure[Plot for ROD versus number of projections for \textsc{Project and Forget} and Bregman's cyclic method]{\includegraphics[width=0.49\linewidth]{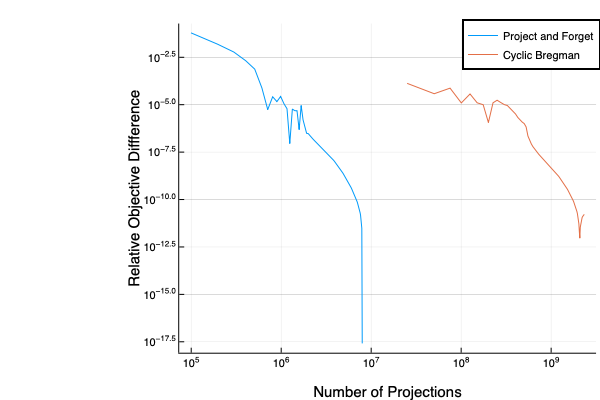}}
    \caption{Plot for ROD and FD versus number of projections for \textsc{Project and Forget} and Bregman's cyclic method}
    \label{fig:drot-internal}
\end{figure}

These results hold for synthetic data. We also test our method on real world data. For these tests, we consider the application of optimal transport known as color transfer (as done in \cite{blondel18a}). Here, we only compare against the best solver from the previous experiment. The results can be seen in Table \ref{table:color}. As we can see from the table, as we scale the size of the problem, \textsc{Project and Forget} takes less time than Mosek and we can scale to larger problems before running out of memory. We also have better convergence statistics than Mosek.

\begin{table}[]
    \centering
    \begin{tabular}{c|ccccccc}
    \toprule 
    Method & n & Time & Objective Gap & \multicolumn{2}{c}{Feasibility Error} \\ \midrule 
    & & & & Primal & Dual \\ \midrule 
    Project and Forget  & \multirow{2}{*}{512} &  611 & \cellhi 8e-11 & \cellhi 0 & \cellhi 9.8e-11 \\
    Mosek Dual & & \cellhi 6 & 2e-10 & 7e-8 & 7e-9\\ \midrule
    Project and Forget  & \multirow{2}{*}{1024} &  724 &  \cellhi 1.3e-10 & \cellhi 0 & \cellhi 5.2e-11 \\
    Mosek Dual & & \cellhi 31 & 9e-10 & 3e-8 & 2.9e-9\\ \midrule
    Project and Forget  & \multirow{2}{*}{4096} & \cellhi 825 &  \cellhi 2.8e-9 & \cellhi 0 & \cellhi 6.9e-9 \\
    Mosek Dual & & 995 & 2e-7 & 5.5e-7 & 2e-8\\ \midrule
    Project and Forget  & \multirow{2}{*}{8192} &\cellhi 2197 & \cellhi 3e-9 & \cellhi 0 & \cellhi 9.7e-8 \\ 
    Mosek Dual & & \multicolumn{4}{l}{Out of Memory} \\ \bottomrule
    \end{tabular}
    \caption{Table showing the time taken in seconds, the duality gap (difference between primal and dual objectives), as well as the primal and dual feasibility error for \textsc{Project and Forget} and Mosek for solving the dual formulation for color transfer. Here we set $\gamma = 1e2$}
    \label{table:color}
\end{table}

\subsection{Information Theoretic Metric Learning}

The next problem we consider is metric learning. There are many different versions of this problem (see \citet{metriclearning1, metriclearning2} for surveys on the topic).  We focus on a specific instantiation, information theoretic metric learning (ITML) from \cite{itml}. This formulation allows us to demonstrate the wide range of objective functions that our method can handle.  

Given two sets $S,D$ that represent the sets of similar and dissimilar points, we learn the co-variance matrix $M$ of a Gaussian distribution $p(x;M)$. The metric learned is then the Mahalanobis metric associated with $M$. Concretely, the problem is as follows:
 \begin{equation} 
 \begin{array}{ll@{}ll}
 \label{problem:itml}
 \text{minimize}  & KL\big(p(x;M) \| p(x; I)\big)  \\
 \text{subject to}& d_A(x_i,x_j) \le u  \ \ \ \ \  \ \ (i,j) \in S\\
                         & d_A(x_i,x_j) \ge l.  \ \ \ \ \  \ \ (i,j) \in D.
 \end{array}
 \end{equation}
 
\cite{itml} suggest solving this problem by sampling a small subset of the constraints and then using Bregman's cyclic method. However, since our method solves the problem with the large number of constraints, when we compare against the method from \cite{itml}, we can no longer compare in terms of convergence statistics or time taken to solve the problem. Hence, as the method from \cite{itml} is also based on Bregman projections, we limit both algorithms to the same number of projections. Then, we compare the algorithms on the quality of the solution; that is, the testing accuracy based on the metrics learned. 

For each data set, we uniformly at random choose $80\%$ of the data points to be the training set and the remaining to be the test set. We then let the similar pairs $S$ be those pairs that have the same label and the dissimilar pairs $D$ be all of the other pairs. For the algorithm from \cite{itml}, as suggested in the paper, we randomly sampled $20c^2$ constraints, where $c$ is the number of different classes and run the algorithm so that it performes approximately $10^6$ projections. We implement the algorithm in Julia. \textsc{Project and Forget} returns $50c^2$ uniformly sampled similarity constraints and $50c^2$ uniformly random dissimilarity constraints and runs until we perform approximately $10^6$ projections.

The hyper-parameters were set as follows: $\gamma = 1, u = 1, l=10$. The pseudo-code for our algorithm can be seen in Algorithm \ref{alg:itmlpf}. The classification is done using the $k$ nearest neighbor classifier, with $k=5$.

\begin{algorithm}[!htb]
\caption{Pseudo-code for the Project and Forget algorithm for ITML.}
\label{alg:itmlpf}
\begin{algorithmic}[1]
\Function{\textsc{PFITML}}{$X,C,\gamma,u,l,S,D$}
 \State $\lambda^0 = 0$, $\Xi_{ij} = u$ for $(i,j) \in S$ and $\Xi_{ij} = l$ for $(i,j) \in D$. Initialize $C= I$. 
    \While{Not done 1e7 projections}
	\State Randomly sample $50c^2$ $(i,j)$ from $S$ and $50c^2$ from $D$
	\State Do projection for these constraints.
	\State Project onto all remembered constraints.
    \EndWhile
    \Return $C$
\EndFunction

\Function{\textsc{Projection}}{$X,i,j,S,D,u,l,\Xi,\lambda,C$}
	\State $p = dist_C(X_i,X_j)$
	\State $\delta = 1$ if $(i,j) \in S$ and $\delta = -1$ if $(i,j) \in D$
	\State $\alpha = min\left(\lambda_{ij}, \frac{\delta}{2}\left(\frac{1}{p} - \frac{\gamma}{\Xi_{ij}}\right)\right)$
	\State $\beta = \frac{\delta\alpha}{1-\delta \alpha p}$
	\State $\Xi_{ij} = \gamma \frac{\Xi{ij}}{\gamma + \delta \alpha \Xi_{ij}}$
	\State $\lambda_{ij} = \lambda_{ij} - \alpha$
	\State $C = C + \beta C (x_i-x_j)(x_i-x_j)^TC^T$
\EndFunction
\end{algorithmic}
\end{algorithm}

 \begin{table}[!htb]
 \setlength{\tabcolsep}{4.5pt} 
 \centering
 \begin{tabular}{ccccccc}
 \toprule
 Dataset & |S| & |D| & \multicolumn{2}{c}{\textsc{Project and Forget}} & \multicolumn{2}{c}{ITML} \\
         & & &Test Accuracy & \#Projections & Test Accuracy & \#Projections \\
 \midrule
 Banana & 4.5e6 & 4.5e6 & 89.6\% & 1.003e6 & \cellhi 89.8\% & 1e6 \\
 Ionsphere & 2.1e4 & 1.8e4 & \cellhi 88.3\% & 1.007e6 & 83.7\% & 1e6 \\
 Coil2000 & 2.7e7 & 3.5e6 & \cellhi 93.3\% & 1.02e6 & 93.2\% & 1e6 \\
 Letter & 4.9e6 & 1.2e8 & 90.0\% & 1.08e6 & \cellhi 92.6\% & 1.0004e6\\
 Penbased & 3.9e6 & 3.5e7 & \cellhi 98.5\% & 1.08e6 & 98.2\% & 1e6\\
 Spambase & 3.5e6 & 3.2e6 & \cellhi 92.5\% & 1.01e6 & 90.7\% & 1e6  \\
 Texture & 8.8e5 & 8.8e6 & \cellhi 99.8\% & 1e6 & 98.7\% & 1.002e6 \\
 
 \bottomrule
 \end{tabular}
  \caption{Table comparing the testing accuracy of \textsc{Project and Forget} and \textsc{ITML}. Numbers are averaged over 5 trials.}
 \label{table:itml}
 \end{table}
 
As shown in Table \ref{table:itml}, our algorithm results in, for the most part, better test accuracy. The test accuracy is significantly better for the Ionosphere data set. This is the smallest of the data sets, from which we conclude that if \textsc{Project and Forget} can solve the entire problem, instead of the approximation from \cite{itml}, we get a performance boost. As the size of the problem gets larger, since we have limited the number of projections, our algorithm has not converged as much. On the other hand, since the size of the problem for \cite{itml} does not depend on the size of $S$ and $D$, its convergence does not change. 

We are also interested in how the test accuracy evolves as we do more projections. This can be seen in Figure \ref{fig:itml-internal}. As we can see since \textsc{ITML} solves a smaller problem; it very quickly stabilizes. However, since \textsc{Project and Forget} solves the complete problem using the stochastic variant of the method, as we do more projections, we see more data points and the accuracy evolves. In some cases, such as for Ionosphere, this results in increased accuracy. However, in other cases, such as Letter, this results in decreased accuracy. 

\begin{figure}
    \centering
    \subfigure[Inosphere]{\includegraphics[width = 0.49\linewidth]{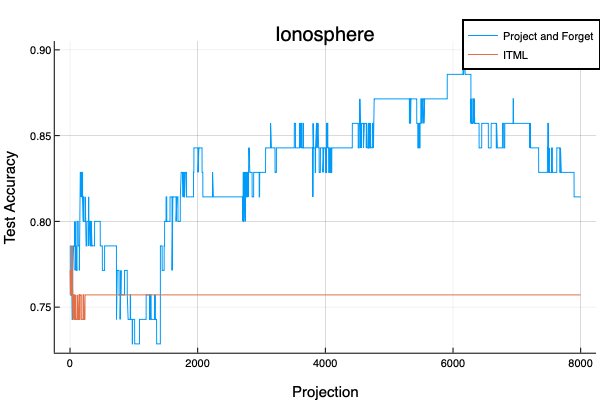}}
    \subfigure[Letter]{\includegraphics[width = 0.49\linewidth]{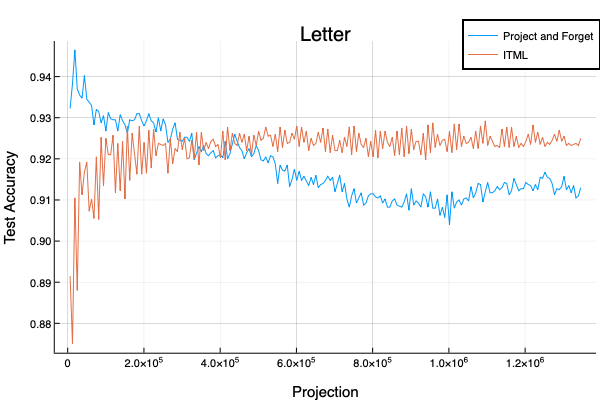}}
    \caption{Plots showing the test accuracy after each projection for \textsc{Project and Forget} (to solve the whole problem) and cylic Bregman to solve the restricted problem (\textsc{ITML}) for the Ionosphere and Letter dataset}
    \label{fig:itml-internal}
\end{figure}

\subsection{Support Vector Machines}

Finally, to show the usefulness of the truly stochastic variant, we use our algorithm to train $L_2$ SVMs. Given training data $x_1, \hdots, x_n$, a feature map $\phi$, and labels $y_1, \hdots, y_n \in \{-1,1\}$, we want to learn a vector $w$ and slack variables $\xi_i$ such that 
 \begin{equation*}
 \begin{array}{ll@{}ll}
 \text{minimize}  & \frac{1}{2} w^T w  + \frac{C}{2} \sum_i \xi_i^2 & \\
 \text{subject to}& y_i(w^T\phi(x_i)) \ge 1 -\xi_i &\ \ \ \ \forall \, i =1, \hdots n  \\
                  &                                                \xi_i \ge 0, &i=1 ,..., n.
 \end{array}
 \end{equation*}

Here we use an $L_2$ penalty for the slack variables instead of the usual $L_1$ penalty. \citet{Tang2013DeepLU} provides insights into when using an $L_2$ penalty would provide better results. 

\citet{Joachims2009CuttingplaneTO} is a cutting plane based method to train $L_2$ SVMs. However, in contrast with \citet{Joachims2009CuttingplaneTO}, we do not need to use an oracle that finds the maximum violated constraint. Instead, we used the truly stochastic variant of our algorithm. Hence, instead of searching through our data points to find large violations, we pick a subset of data points uniformly at random and project onto the constraints defined by these data points. 

\textbf{Implementation Details.}
We used the LIBLINEAR implementation that is available online. We interfaced with LIBLINEAR using an interface written in Julia. The implementation for our algorithm follows the pseudo-code presented in algorithm \ref{alg:imp4}.

 \begin{algorithm}[!htbp]
 \caption{Pseudo-code for the implementation training an SVM.}
 \label{alg:imp4}
 \begin{algorithmic}[1]
  \State $z^0,w^0 = 0$. $n$ is the number of training samples
     \For{$iter = 1, \hdots, MaxIters$}
     	\For{$i = 1, \hdots, n$}
 		    \State $x_j$ = random data point
 		    \State Project $w$ onto the constraint defined by $x_j$. 
 	    \EndFor
     \EndFor
     \Return $w$
 \end{algorithmic}
 \end{algorithm}

\textbf{Data.} We generated the data matrix $X$ as follows. Each entry $X_{ij}$ was sampled from $\mathcal{N}(0,K^2)$. We then sampled the coordinates of $H$ from $\mathcal{N}(0,1)$ and used $H$ to label all the points in $X$. We then added noise so that $H$ did not correctly separate the data. This noise was also sampled from $\mathcal{N}(0,1)$. We generate data in this manner because a standard pre-processing step is to normalize features (i.e., transform the data so that each feature has mean zero and variance one). 

We used $K= 1,2$ and $5$ to generate our data set. The different values for $K$ gave us the different values of the percentage of data points misclassified by the original classifier $s$ that we report in Table \ref{table:SVM}.

\textbf{Convergence criterion.} We considered a few different convergence criteria, but settled upon running the algorithm for 100 projections since whenever we train an SVM, we have a validation set. This number of projections becomes a hyperparamter in the model and, we found experimentally that increasing the number of projections (unless increasing it by several orders of magnitude) beyond 100 did not increase the validation accuracy.

\subsubsection{Results}

Table \ref{table:SVM} gives us the results for the experiment. In this experiment, we are not checking for convergence, but demonstrating a possible use case of the truly stochastic variant of our algorithm. Because of the way in which we have generated data, it is easy to see that we have good test accuracy only if the current separating hyper-plane is roughly similar to the optimal one. We also see that this method has roughly similar performance to the \textsc{LibLinear} dual solver. However, the primal solver from \textsc{Liblinear} has the best testing accuracy. Thus, we see that the truly stochastic version of the algorithm can potentially be used to get reasonable results quickly. 
 
\begin{table*}[h!]

 \centering
 \begin{tabular}{cccccccccc}
 \toprule
 \multicolumn{4}{c}{Parameters} &  \multicolumn{3}{c}{Time}  & \multicolumn{3}{c}{Test Acc.} \\ \midrule
  $n$ & $d$ & $s$ & $C$ & Ours &  \multicolumn{2}{c}{Liblinear} & Ours & \multicolumn{2}{c}{Liblinear}   \\
 & &  & & & Dual & Primal & & Dual & Primal  \\ \midrule
 1,000,000& 100 & $6.3\%$ & $10^3$ & \cellhi 1.33 s & 547 s  & 9.87 s & 94.7\% & 94.7\% & \cellhi 96.8\%  \\
 1,000,000& 100 & $12.6\%$ & $10^3$ & \cellhi 1.38 s & 1032 s &  10.1 s &91.4\% & 90.2\% & \cellhi  93.6\%  \\
 1,000,000& 100 & $29.5\%$ & $10^3$ & \cellhi 1.58 s & 1532 s & 8.32 s & 78.4\% & 77.1\%   & \cellhi 85.2\% \\
 \bottomrule
 \end{tabular}
  \caption{Table comparing the testing accuracy and running times for the truly stochastic variant of or algorithm against \textsc{LibLinear} for binary classification using an $L_2$ SVM.} 
 \label{table:SVM}
\end{table*}

%
%
%

\section{Conclusion and Future work} 

In conclusion, in this paper, we present a new algorithm \textsc{Project and Forget} that can be used to solve highly constrained convex optimization problems. We show that under some general assumptions, our method has a linear rate of convergence. Additionally, we demonstrate that our method returns both the dual and primal solutions.

The main type of problems we are interested in solving are metric constrained problems. We show that standard solvers cannot solve metric constrained problems at even moderate scales. We then demonstrate that \textsc{Project and Forget} can solve the problem at large scales. We do this by solving larger instances of the metric nearness and correlation clustering problems. 

We also demonstrate the generality of the method. First, we show that a variety of objective functions can be used with our methods, as demonstrated by the information theoretic metric learning experiment. We also demonstrate the role of sparsity and how it helps our method using the optimal transport experiment. 

For future work, we hope that our method inspires a new way of learning metrics on data sets. Earlier researchers were either restricted to learning embeddings and then extracting a metric from those embeddings. However, using our method, we can now learn a much wider variety of metrics. We also hope that once we can learn optimal metrics on data, these optimal metrics then suggest the types of spaces the data should be embedded into, rather than users picking the space in which to embed in an ad hoc fasion. We also hope to adapt the general convex constraint version to solve highly constrained semi-definite programs. 
\label{sec:conclusion}
\section{Proofs}

\subsection{Proof of part \ref{part:1prime} of Theorem \ref{thm:linear} for oracles that satisfy property \ref{prop:sep}} 
\label{sec:proof11}

We remind the reader of the notation established in Section~\ref{sec:prelims}. The vector of variables over which we optimize is $x$, $f$ is the objective function, 
$H_i = \{ y : \langle y, a_i \rangle = b_i \}$ are the hyper-planes that lie on the boundaries of the half-space constraints, $L(x,z)$ is the Lagrangian, $z$ is the dual variable, $A$ is the matrix with rows given by $a_i$, and $b$ is the vector with rows $b_i$. 

Next, we clarify the indexing of the variables. Algorithm \ref{alg:bregman} has three steps per iteration and during the \textsc{Project} step there are multiple projections. When we want to refer to a variable after the $\nu$th iteration, it will have a superscript with a $\nu$. When we refer to a variable after the $n$th, $i$th, $k$th projection, we use the superscript $n$,$i$,$k$. Finally, before the $n$th projection, $i(n)$ will represent the index of the hyper-plane onto which we project.

Finally, let $R$ be the maximum number of constraints that our oracle $\mathcal{Q}$ returns. This is clearly upper bounded by the total number of constraints, which we have assumed is finite.
We are now ready to prove the first part of Theorem~\ref{thm:linear}. 

{\reftheorem{thm:linear} (Part 1) If $f \in \mathcal{B}(S)$, $H_i$ are strongly zone consistent with respect to $f$,  $\exists\, x^0 \in S$, such that $\nabla f(x^0) = 0$, and the oracle $\mathcal{Q}$ satisfies property \ref{prop:sep}, then any sequence $x^n$ produced by Algorithm \ref{alg:bregman} converges to optimal solution of problem \ref{problem:linear}. }
\begin{proof} The proof of this theorem is an adaptation of the proof of convergence for the traditional Bregman method that is presented in \citet{book} whose proof entails the following four steps. The main difference between Censor and Zenios' proof and ours is that of the last two steps. We present the entire proof, however, for completeness. To that end, we show the following. 
\begin{itemize}[nosep]
	\item Step 1. The KKT condition, $\nabla f(x) = \nabla f(x^0) -A^T z$, is always maintained.
	\item Step 2. The sequence $x^n$ is bounded and it has at least one accumulation point.
	\item Step 3. Any accumulation point of $x^n$ is feasible (i.e., is in $C$).
	\item Step 4. Any accumulation point is the optimal solution. 
\end{itemize}

\noindent\textbf{Step 1.} The KKT condition, $\nabla f(x) = \nabla f(x^0) -A^T z$, is always maintained.

We show by induction that for all $n$, $\nabla f(x^n) = - A^T z^n$. In the base case, $z = 0$, thus, $\nabla f(x^0) = 0 = -A^T z^0$. Assume the result holds for iteration $n$, then

\begin{align*}
	\nabla f(x^{n+1}) &= \nabla f(x^n) + c^n a_{i(n)} \\
	                  &= -A^Tz^n + A^Tc^ne_{i(n)} \\
	                  &= -A^T(z^n - c^ne_{i(n)}) \\
	                  &= -A^Tz^{n+1}. 
\end{align*}

We know that $c^n \le z^n_{i(n)}$; therefore, we maintain $z^{n+1} \ge 0$ as well. 

\noindent\textbf{Step 2.} The sequence $x^n$ is bounded and has an accumulation point.

To show that $x^n$ is a bounded, we first show that $\big(L(x^n,z^n)\big)_n$ is a monotonically increasing sequence bounded from above. This observation results from the following string of equalities.

{\small
\begin{align*} 
L(x^{n+1},z^{n+1}) - L(x^n, z^n) &=f(x^{n+1}) - f(x^n) + \langle z^{n+1}, Ax^{n+1} - b\rangle  - \langle z^n,Ax^n-b\rangle \\
&= f(x^{n+1}) - f(x^n)  + \langle A^Tz^{n+1}, x^{n+1} \rangle  -  \langle A^Tz^{n}, x^{n} \rangle - \langle z^{n+1} - z^n, b \rangle \\
&= f(x^{n+1}) - f(x^n) - \langle \nabla f(x^{n+1}), x^{n+1} \rangle  \langle \nabla f (x^n), x^{n} \rangle + \langle c^n e_{i(n)}, b \rangle\\
&= f(x^{n+1}) - f(x^n) - \langle \nabla f(x^{n})+ c^na_{i(n)}, x^{n+1} \rangle  +  \langle \nabla f (x^n), x^{n} \rangle + c^n b_{i(n)} \\
&= f(x^{n+1}) - f(x^n) - \langle \nabla f(x^{n}) , x^{n+1} - x^n \rangle - \langle  c^na_{i(n)}, x^{n+1} \rangle + c^n b_{i(n)} \\
&= \underbrace{D_f(x^{n+1},x^n)}_{(1)}  + \underbrace{c^n(b_{i(n)} - \langle a_{i(n)},x^{n+1}\rangle)}_{(2)}. 
\end{align*}}

Next, we show that both terms (1) and (2) are non-negative. We know that $D_f$ is always non-negative so  we only need to consider term (2). There are two cases: (i) if $c^n = \theta^n$, then $x^{n+1} \in H_{i(n)}$ and $b_{i(n)} - \langle a_{i(n)},x^{n+1}\rangle = 0$. On the other hand, (ii) if $c^n = z^n_{i(n)}$, then $b_{i(n)} - \langle a_{i(n)},x^{n+1}\rangle \ge 0 $ and $c^n \ge 0$. We can conclude that the difference between successive terms of $L(x^n,z^n)$ is always non-negative and, hence, it is an increasing sequence. 

To bound the sequence, let $y$ be a feasible point (i.e., $Ay \le b$). (Note that this is the only place we use the assumption that the feasible set is not empty.) Then 
\begin{align*} 
D_f(y,x^n) &= f(y) - f(x^n) - \langle \nabla f(x^n), y-x^n \rangle \\
&= f(y) - f(x^n) + \langle z^n, Ay - Ax^n \rangle \\
&\le f(y) - f(x^n) + \langle z^n, b - Ax^n \rangle. 
\end{align*}
Rearranging terms in the inequality, we obtain a bound on the sequence $L(x^nz^n)$ from above:
\begin{align*}
	L(x^n, z^n) &= f(x^n) + \langle z^n, Ax^n -b \rangle \\
	&\le f(y) - D_f(y,x^n) \\
	&\le f(y).
\end{align*}

Since the sequence $(L(x^n,z^n))_{n \in \mathbb{N}}$ is increasing and bounded, it is a convergent sequence and the difference between successive terms of the sequence goes to 0. Therefore,  
\[ 
	\lim_{n \to \infty} D_f(x^{n+1}, x^n) = 0. 
\] 
From the previous inequality we also have that 
\[ 	
	D_f(y,x^n) \le f(y) - L(x^n,z^n) \le f(y) - L(x^0,z^0) =: \alpha. 
\] 
Using part (ii) of the definition of a Bregman function, we see that $L_2^f(y,\alpha)$ is bounded and since $(x^n)_{n \in \mathbb{N}} \in L_2^f(y,\alpha)$, $x^n$ is an infinite bounded sequence. Thus, has an accumulation point.  

\textbf{Step 3.}  Any accumulation point $x^*$ of $x^n$ is feasible (i.e., is in $C$).

This is the only step in which we use the fact that our oracle satisfies property \ref{prop:sep}. Let $x^*$ be some accumulation point for $x^n$ and assume for the sake of contradiction that $Ax^* \not \le b$. Let $\tilde{A}, \tilde{b}$ be the maximal set of constraints that $x^*$ does satisfy; i.e., 
\[ 
	\tilde{A}x^* \le \tilde{b}.
\]
Let $(x^{n_k})$ be a sub-sequence such that $x^{n_k} \to x^*$ and $H$ be a constraint that $x^*$ violates. Define $\epsilon$ as 
\begin{equation} \label{eq:epsilon} 
	\epsilon := \phi(d(x^*,H))  > 0. 
\end{equation}

Because $x^n$ is bounded, $x^{n_k}$ is a convergent sub-sequence $x^{n_k} \to x^*$, and $D_f(x^{n+1}, x^n) \to 0$, by Equation 6.48 from \cite{book} we see that for any $t$, 
\[ 
	x^{n_k + t} \to x^*. 
\] 
In particular, the proposition holds for all $t \le 2|\tilde{A}| + 2 =: T$.

Let us consider an augmented sub-sequence $x^{n_k},x^{n_k+1},\hdots, x^{n_k+T}$, i.e., add in extra terms. Note that if $n_{k+1}-n_k \to \infty$, then this augmented sequence is not the entire sequence. We want to show that infinitely many of the terms in our augmented sequence satisfy a constraint \emph{not} in $\tilde{A}$. Should this hold, then because we have only finitely many constraints, there exists at least one single constraint $\tilde{a}$ that is \emph{not} in $\tilde{A}$, such that infinitely many terms of the augmented sequence all satisfy the single constraint $\tilde{a}$. Finally, because our augmented sequence converges to $x^*$ and we are only looking at closed constraints, we must have that $x^*$ also satisfies the constraint $\tilde{a}$. Thus, we would arrive at a contradiction of the maximality of 
$\tilde{A}$ and $x^*$ would have to be in the feasible region. 

To see that infinitely many of the terms in our augmented sequence satisfy a constraint \emph{not} in $\tilde{A}$, let $\nu_k$ be the iteration in which the $n_k$th projection takes place. Note that we can assume without loss of generality that in any iteration, we project onto any constraint at most once. If this were not the case and we projected onto constraints more often, we would simply change the value of  $T$ to reflect this larger number of projections. Therefore, we have two possibilities for which iteration the $n_k + |\tilde{A}|+1$st projection takes place and we consider each case below.

\textbf{Case 1:} The $n_k + |\tilde{A}|+1$st projection, infinitely often, takes places in $\nu_k$th iteration. Since we project onto each constraint at most once, one of the projections between the $n_k$ and $n_k + |\tilde{A}| + 1$st projection must be onto a hyper-plane defined by a constraint \emph{not} represented in $\tilde{A}$ and amongst the terms $x^{n_k},x^{n_k+1},\hdots, x^{n_k+|\tilde{A}|+1}$, we must have a term that satisfies a constraint \emph{not} in $\tilde{A}$ infinitely often. 

\textbf{Case 2:} The $n_k + |\tilde{A}|+1$st projection, infinitely often, takes place in $\nu_k+1$st iteration or later. 

If this projection happens in $\nu_k+1$st iteration, consider the iteration in which we do the $n_k + T$th projection. If this projection also takes place in the $\nu_k+1$st iteration, then we have done at least $|\tilde{A}| + 1$ projections in the $\nu_k+1$st iteration. Hence, amongst $x^{n_k+|\tilde{A}|+1}, \hdots, x^{n_k+T}$, we must have a term that satisfies a constraint \emph{not} in $\tilde{A}$. 

If the $n_k + |\tilde{A}|+1$st or the $n_k +T$th projection happens in the $\nu_k+2$nd iteration or later, then between the $n_k$th and the $n_k +T$th projection, we must have projected onto all constraints returned by oracle in the $\nu_k+1$st iteration. Therefore, we must have projected onto some hyper-plane defined by $\hat{a}^{n_k}$ (for some constraint $\hat{C}^{n_k}$) such that
\[ 
	\hat{d}^{n_k} := d(x^{\nu_k+1}, \hat{C}^{n_k}) \ge \phi(d(x^{\nu_k+1},C)). 
\]  
Then there exists a sufficiently small $\delta > 0$, depending on $\tilde{A}, \tilde{b}, x^*$, such that if $\|y - x^*\| \le \delta$, then 
\[ 
	\tilde{A} y \le \tilde{b} + \frac{\epsilon}{2} \mathbbm{1}, 
\] 
where $\mathbbm{1}$ is vector of all ones. 

Since our augmented sequence converges to $x^*$, we know that there exists a $K$, such that for all $k \ge K$ and $t \le T$, $\|x^{n_k+t} - x^*\| < \delta$. That is, for all $k \ge K$ and $t \le T$,  
\begin{equation} \label{eq:assump} 
	\tilde{A} x^{n_k+t} \le \tilde{b} + \frac{\epsilon}{2} \mathbbm{1}.
\end{equation} 
Note $x^{\nu_k+1}$ is within our augmented sequence so if $\hat{a}$ is infinitely often in $\tilde{A}$, by \eqref{eq:assump}, we have that infinitely often 
\[ 
	\frac{\epsilon}{2} \ge \hat{d}^{n_k}. 
\] 
Finally, because the augmented sequence converges to $x^*$, 
\begin{align*}
	\epsilon &= \phi(d(x^*,H)) \\
	&\le \phi(d(x^*,C)) \\
	&= \lim_{k \to \infty}  \phi(d(x^{\nu_k+1},C)) \\
	&\le \lim_{k \to \infty} \hat{d}^{n_k} \\
	&\le \frac{\epsilon}{2}.
\end{align*}
 The first inequality follows from the fact that $\phi$ is non-decreasing. Thus , we have a contradiction. Therefore, $\hat{a}^{n_k}$ is \emph{not} in $\tilde{A}$ infinitely often and amongst $x^{n_k},x^{n_k+1},\hdots, x^{n_k+T}$, we must have a term that satisfies a constraint \emph{not} in $\tilde{A}$ infinitely often. Thus, there is a constraint $\tilde{a}$ not in $\tilde{A}$ that is satisfied by infinitely terms of our augmented sequence and we have a contradiction.

\noindent\textbf{Step 4.} Optimality of accumulation point.

Because we have established the feasibility of all accumulation points, we show next that any accumulation point $x^{n_k} \to x^*$ is optimal. 

First, we show that there exists an $N$, such that for any $k \ge N$, and  for any $a_i$ such that 
\[ 
	\langle a_i,x^* \rangle < b_i ,
\] 
we have $z_i^{n_k} = 0$. To do so, we assume for the sake of contradiction that for some $a_i$, our sequence $z^{n_k}_i$ is infinitely often not 0.  The algorithm then projects onto this constraint infinitely often. Therefore, the point $x^{n_k}$ lies on the hyper-plane defined by $a_i,b_i$ infinitely often. Thus, the limit point $x^*$ must lie on this hyper-plane as well and we have a contradiction. 

Now we know that for any constraint $a_i$, we either have that $\langle a_i, x^*\rangle = b_i$ or we have that $z_i^{n_k} = 0$ for the tail of the sequence. Thus, for sufficiently large $k$, 
 \begin{align*} 
 	\langle z^{n_k} , Ax^{n_k} -b \rangle &= \langle A^T z^{n_k} , x^{n_k} - x^* \rangle \\
 	&= \langle - \nabla f(x^{n_k})  , x^{n_k} - x^* \rangle  \\
 	&= D_f(x^*,x^{n_k}) - f(x^*) + f(x^{n_k}).
 \end{align*}
 Next, by part (iii) of the definition of a Bregman function, 
 \[ 
 	\lim_{k \to \infty} D_f(x^*,x^{n_k}) = 0. 
 \] 

Finally, 
\[ 
	\lim_{k \to \infty} L(x^k,z^k) = \lim_{k \to \infty} f(x^{n_k}) +   
	\langle z^{n_k} , Ax^{n_k} -b \rangle  = f(x^*). 
\] 
We also know that $L(x^k,z^k) \le f(y)$ for any feasible $y$. Thus, $f(x^*) \le f(y)$. Hence $x^*$ is an optimal solution. Now since $f$ is strictly convex, this optimal point is unique. Therefore, we have that $(x^n)_{n \in \mathbb{N}}$ has only one accumulation point and $x^n \to x^*$.  
\end{proof}

An important consequence of this proof is the following proposition: 

{\refprop{prop:active} If $a_i$ is an inactive constraint, then there exists a $N$, such that for all $n \ge N$, we have that $z^n_i=0$. That is, after some finite time, we never project onto inactive constraints ever again. }

\subsection{Proof of part \ref{part:1prime} of Theorem \ref{thm:linear} for oracles that satisfy property \ref{prop:rand}} \label{sec:proof12}

In this subsection, we prove part \ref{part:1prime} of Theorem \ref{thm:linear} for oracles that satisfy property \ref{prop:rand}. We make note of the key ideas in this proof as they are useful in the proof of the truly stochastic variant. To be precise, we prove:

{\reftheorem{thm:linear} (Part 1) If $f \in \mathcal{B}(S)$, $H_i$ are strongly zone consistent with respect to $f$,  $\exists\, x^0 \in S$, such that $\nabla f(x^0) = 0$, and the oracle $\mathcal{Q}$ satisfies property \ref{prop:rand}, then with probability $1$, any sequence $x^n$ produced by Algorithm \ref{alg:bregman} converges to optimal solution of problem \ref{problem:linear}. }

\begin{proof} Assume that we have an oracle that satisfies property \ref{prop:rand}. A careful reading of the previous proof shows that if we switch out an oracle with property \ref{prop:sep} for an oracle with property \ref{prop:rand}, then we only need to adjust step 3 of our proof. The crucial part of that step was showing that for our augmented sequence, we had infinitely many terms that satisfied a constraint \emph{not} in $\tilde{A}$. We make the following adjustments to our analysis.

Let $\nu_k$ be the iteration in which the $n_k$th projection takes place. In the previous proof, we used the property of the oracle only when the $n_k + T$th projection took place in the $\nu_k+2$nd iteration or a later iteration. In this case, the augmented sequence encompasses all of the $\nu_k+1$st iteration infinitely often. 

Let us choose a constraint $\hat{a}$ that is not satisfied by $x^*$. Because the oracle satisfies property \ref{prop:rand}, for each iteration $\nu_k+1$, our oracle returns $\hat{a}$ with probability at least $\tau > 0$. By the Borel Cantelli Lemma, we know that during the selected iterations, the constraint $\hat{a}$, with probability one, is returned infinitely often by our oracle. Thus, our augmented sequence satisfies this constraint with probability $1$ and $x^*$ lies in the feasible region with probability $1$. 
\end{proof}

A direct consequence of this proof is the proof of the probabilistic version of Proposition \ref{prop:active}.

{\refprop{prop:active} With probability $1$, we project onto inactive constraints a finite number of times.}

\subsection{Proof of part \ref{part:3prime} of Theorem \ref{thm:linear}} \label{sec:proofpart2}

The discussions in \citet{IusemH, IusemB} almost directly apply to that for our algorithm. For completeness, we present it along with the necessary modifications. As with the traditional Bregman algorithm, we first present the case when $f(x)$ is quadratic. That is, 
\[ 
	f(x) = r + s^T \cdot x + \frac{1}{2}x^T Hx, 
\] 
where $H$ is a positive definite matrix. In this case, it is easy to see that 
\[ 
	D_f(x,y) = \|x-y\|_H^2 := (x-y)^T H (x-y).
\]

\subsubsection{Proof of part \ref{part:3prime} of Theorem \ref{thm:linear}---Quadratic Case}

In this section, we will prove the following variation of Theorem \ref{thm:linear}. 

{\reftheorem{thm:linear} If $f$ is a strictly convex quadratic function, $H_i$ are strongly zone consistent with respect to $f$,  $x^0 = H^{-1} s \in S$, and the oracle $\mathcal{Q}$ satisfies either property \ref{prop:sep} or \ref{prop:rand}, then there exists $\rho \in (0,1)$ such that 
\begin{equation} 
\tag{\ref{eq:conv}} \lim_{\nu \to \infty } \frac{\|x^* - x^{\nu+1} \|_H}{\|x^* - x^\nu \|_H} \le \rho,  
\end{equation} where $\|y\|_H ^2 = y^THy$. In the case when we have an oracle that satisfies property \ref{prop:rand}, the limit in \ref{eq:conv} holds with probability $1$. }

We establish some notation ahead of our lemmas. Let $I$ be the set of all active constraints. That is, if $x^*$ is the optimal solution then 
\[ 
	I = \{ i : \langle a_i , x^*\rangle = b_i \}.
\]
Let $S$ be the set of all $x$ that satisfy these constraints (namely $S = \{ x : \forall i \in I, \, \langle a_i, x\rangle = b_i \}$). Let $H_x$ be the hyper-plane, such that $H_x$ represents the constraint in $I$ that is furthest from $x$.  Define 
\[ 
	\mu = \inf_{x \not\in S} \frac{d(x, H_x)}{d(x,S)}. 
\] 
By \citet{IusemH}, we know that $\mu > 0$. Let $U$ be the set of all optimal dual variables $z$; i.e.,  $U = \{ z : \nabla f(x^*) = -A^Tz \}$ and let $I_\nu =  \{ i : z^{\nu+1}_i \neq 0\}$.

Next, we present a few preliminary lemmas. These lemmas exist in some form or another in \citet{IusemB, IusemH} and we present them suitably modified for our purpose. These lemmas require the following set of assumptions about an iteration $\nu$:
\begin{enumerate} 
	\item $\forall i \not\in I$, $z_i^\nu = 0$;
	\item for all $i \not\in I$, we do not project onto this constraint in the $\nu$th iteration; and,
	\item there exists $z \in U$, such that for all $i \not\in I_\nu$, $z_i = 0$. 
\end{enumerate}

\begin{lemma} \label{lem:converse} Let $x^*$ be the optimal solution for an instance of problem \ref{problem:linear}. For any sequence $x^n \to x^*$ such that $x^n,z^n$ maintain the KKT conditions, there exists an $M$, such that for all $\nu \ge M$, there exists a $z \in U$, such that for all $i \not\in I_\nu$, we have that $z_i = 0$. 
\end{lemma}
\begin{proof} Let $V_\nu = \{ z : \forall i \not\in  I_\nu, z_i = 0\}$. Then assume, for the sake of contradiction, that the result is false. That is, there is a sequence $\nu_k$ such that $V_{\nu_k} \cap U = \emptyset$. Then since there finitely many different $I_\nu$ (hence finitely many $V_\nu$), we have that one of these must occur infinitely often. Thus, by taking an appropriate sub-sequence, we assume, without loss of generality, that $I_{\nu_k}$ are all equal. Let $V=V_{\nu_k}$ and obtain $V \cap U = \emptyset$. 

Since $V$ is a closed subspace, $U$ is a closed set, and $V \cap U = \emptyset$, we must have that $d(V,U) > 0$. But $z^{\nu_k+1} \in V$ and so 
\[ 
	d(z^{\nu_k+C}, U) \ge d(V,U) > 0.
\] 
Since $x^\nu \to x^*$ and we maintain the KKT conditions, we have that for any $z \in U$, 
\[
	A^Tz^{\nu} = - \nabla f(x^\nu)  \to - \nabla f(x^*) = A^Tz.
\] 
Thus $d(z^\nu, U) \to 0$ which is a contradiction. 
\end{proof}

\begin{lemma} \label{lem:ineq}  For any sequence $x^n \to x^*$, if for a given $\nu$, we have that the sequence satisfies assumptions (1) and (2), then 
\[ 
	\| x^{\nu+1} -  x^*\|_Q^2 \le \|x^\nu - x^*\|_Q^2 -  \sum_{n= k}^K \|x^{n+1}-x^{n}\|_Q^2
\] 
where $k$ and $K$ are the indices of the first and last projection that take place in the $\nu$th iteration.
\end{lemma}
\begin{proof} This Lemma is simply a statement about Bregman projections and so its proof requires no modification.
\end{proof}

Before we proceed, we introduce additional notation. Let $A_{I_\nu}, b_{I_\nu}$ be the sub-matrix of $A,b$ with rows from $I_\nu$ and 
\[
	S_\nu = \{ x: A_{I_{\nu_k}}x = b_{I_{\nu_k}} \}.
\]

\begin{lemma} \label{lem:strongerdist} For any sequence such that $x^n \to x^*$, if for a given $\nu$, we have that it satisfies assumptions (1), (2), and (3), then we have that $\|x^{\nu+1} - x^*\|_Q = d(x^{\nu+1}, S_\nu)$. \end{lemma}

\begin{proof} Consider the constrained problem 
\begin{equation} \label{eq:q} 
    \min_{x \in S_\nu} \|x^{\nu+1} - x\|_Q^2. 
\end{equation} 
Then sufficient conditions for a pair $(x,z_{I_\nu})$ to be optimal for this problem are 
\[ 
    A_{I_\nu} x = b_{I_\nu} \text{ and } x = x^{\nu+1} - Q^{-1} A^T_{I_\nu} z_{I_\nu}. 
\] 
By Proposition \ref{prop:active}, we see that since $x^*$ is solution to problem \ref{problem:linear}, we have that $A_{I_\nu} x^* = b_{I_\nu}$. Then by assumptions and the manner in which we do projections, we have that there exists $z \in U$, such that for all $i \not \in I_\nu$, $z_i = 0$ and 
\[
    x^* = x^{\nu+1} - Q^{-1}A^T(z^{\nu+1} - z).
\]

Then since $z^{\nu+1}_i = 0$ for all $i \not \in I_\nu$, we have that  
\[ 
    x^* = x^{\nu+1} - Q^{-1}A^T_{I_\nu} (z^{\nu+1}_{I_\nu}  - z_{I_\nu} ). 
\]  
Thus, $x^*$ is the optimal solution to \ref{eq:q}. \end{proof}

Next for $x \not\in S_\nu$, let $H_x^\nu$ be the hyper-plane of that is furthest from $x$ and define 
\[ 
    \mu_\nu = \inf_{x \not \in S_\nu} \frac{d(x,H_x^\nu)}{d(x,S_\nu)}. 
\]

Now we are ready to prove the following theorem. 

\begin{thm} \label{thm:linearIter} Let $x^*$ is the optimal solution to problem \ref{problem:linear}. Then given $\nu$ that satisfies assumptions (1), (2), and (3), we have that \[ \|x^{\nu+1} - x^*\|_Q^2 \le \frac{L}{L+\mu^2} \|x^\nu - x^*\|_Q^2 \] where $L$ is the number of projections that happened in $\nu$th iteration.  \end{thm}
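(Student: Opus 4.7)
The strategy is to combine the three preceding lemmas into a distance-reduction argument in the spirit of Iusem--Bregman linear convergence proofs. Rewriting the conclusion of the theorem, it suffices to establish
\begin{equation*}
\tfrac{\mu^2}{L}\,\|x^{\nu+1}-x^*\|_Q^2 \;\le\; \|x^\nu-x^*\|_Q^2 - \|x^{\nu+1}-x^*\|_Q^2,
\end{equation*}
and by Lemma \ref{lem:ineq} this reduces further to showing the single lower bound
\begin{equation*}
S \;:=\; \sum_{n=k}^{K}\|x^{n+1}-x^n\|_Q^2 \;\ge\; \tfrac{\mu^2}{L}\,\|x^{\nu+1}-x^*\|_Q^2.
\end{equation*}
So the whole proof boils down to producing this estimate on $S$.

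First I would invoke Lemma \ref{lem:strongerdist}. Assumptions (1) and (2) force $z_i^{\nu+1}=0$ whenever $i\notin I$, so $I_\nu \subseteq I$ and in particular $x^* \in S_\nu$; the lemma then gives $\|x^{\nu+1}-x^*\|_Q = d(x^{\nu+1},S_\nu)$. Next, pick $H \in I_\nu$ to be the hyperplane among those indexed by $I_\nu$ that is farthest from $x^{\nu+1}$. By the definition of $\mu_\nu$ (which, since there are only finitely many possible index sets $I_\nu$, can be replaced by a uniform lower bound $\mu := \min_\nu \mu_\nu > 0$),
\begin{equation*}
d(x^{\nu+1},H) \;\ge\; \mu\, d(x^{\nu+1},S_\nu) \;=\; \mu\,\|x^{\nu+1}-x^*\|_Q.
\end{equation*}

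The next step is to connect $d(x^{\nu+1},H)$ to the summands of $S$. Since $H \in I_\nu$, the algorithm did a full Bregman projection ($c^j = \theta^j$) onto $H$ at some step $j \in \{k,\ldots,K\}$ during iteration $\nu$, so $x^{j+1}\in H$. Thus $d(x^{\nu+1},H) \le \|x^{\nu+1}-x^{j+1}\|_Q \le \sum_{n=j+1}^{K}\|x^{n+1}-x^n\|_Q$, and Cauchy--Schwarz on this tail sum (with at most $L$ summands) yields $\bigl(\sum_{n=j+1}^{K}\|x^{n+1}-x^n\|_Q\bigr)^2 \le L\,S$. Chaining the three inequalities gives $\mu^2\,\|x^{\nu+1}-x^*\|_Q^2 \le L\,S$, which is exactly the needed lower bound; substituting into Lemma \ref{lem:ineq} and solving for $\|x^{\nu+1}-x^*\|_Q^2$ produces the claimed ratio $L/(L+\mu^2)$.

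The main obstacle, and the place where the assumptions do real work, is justifying that the farthest hyperplane $H \in I_\nu$ was actually projected onto with a \emph{full} projection during iteration $\nu$, so that $x^{j+1}\in H$ holds exactly. This must be extracted from the definition $I_\nu = \{i : z_i^{\nu+1}\neq 0\}$ together with the update rule $c_i = \min(z_i,\theta)$: one needs to argue that under assumptions (1)--(3), any index contributing to the dual after the iteration corresponds to a full projection within the iteration (equivalently, that partial corrections of the form $c = z_i < \theta$ zero out the dual and hence remove the index from $I_\nu$). A secondary, more routine concern is bounding $\mu_\nu$ below uniformly in $\nu$, which is why I would fix $\mu = \min_\nu \mu_\nu$ once and for all at the start and then show that the finitely many possibilities for $I_\nu$ make this minimum strictly positive via Iusem's Hoffman-style argument.
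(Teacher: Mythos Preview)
Your proposal is correct and follows essentially the same route as the paper: use Lemma~\ref{lem:strongerdist} to identify $\|x^{\nu+1}-x^*\|_Q$ with $d(x^{\nu+1},S_\nu)$, invoke the Hoffman-type constant to pass to the farthest hyperplane in $I_\nu$, observe that this hyperplane was fully projected onto during iteration $\nu$ (because its dual variable remains nonzero afterward, so the update had $c=\theta$), and finish with the triangle inequality, Cauchy--Schwarz, and Lemma~\ref{lem:ineq}. The only cosmetic difference is that the paper works with the fixed $\mu$ attached to the full active set $I$ and uses $\mu_\nu\ge\mu$, whereas you take $\mu=\min_\nu\mu_\nu$ directly; you are also more explicit than the paper about why the projection onto $H$ must be full.
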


\begin{proof} By Lemma \ref{lem:strongerdist}, for any such $\nu$ we have that $x^{\nu+1} \not\in S_\nu$ (or we have converged already). Suppose constraint $j \in I_\nu$ defines the hyper-plane $H^{\nu}_{x^{\nu+1}}$. Then by Lemma \ref{lem:strongerdist} and definitions of $\mu_\nu, \mu$ we have the following inequality. 
\begin{align*} 
\|x^{\nu+1} - x^* \| &= d(x^{\nu+1}, S_\nu) \\
&\le \frac{1}{\mu_\nu}  d(x^{\nu+1},H_{x^{\nu+1}}^\nu)  \\
&\le \frac{1}{\mu}d(x^{\nu+1},H_{x^{\nu+1}}^\nu).
\end{align*}
Now since $I_\nu =  \{ i : z^{\nu+1}_i \neq 0\}$, we know that during the $\nu$th iteration we must have projected onto $H^{\nu}_{x^{\nu+1}}$. Note that this is the only place in the proof where we need the fact that we remember old constraints. Let us say that this happens during the $r$th projection of the $\nu$th iteration. 

Note by assumption, we satisfy the assumptions of Lemma \ref{lem:ineq}. Let $y^r,y^{\nu+1}$ be the projections of $x^r, x^{\nu+1}$ onto $H^\nu_{x^{\nu+1}}$. Then we see that

\begin{align*} d(x^{\nu+1},H_{x^{\nu+1}}^\nu)^2 &= \|y^{\nu+1} - x^{\nu+1} \|_Q^2\\
&\le \|y^r - x^{\nu+1} \|_Q^2 \\
&\le \left(\|y^r - x^{r+1} \|_Q + \sum_{i=r+1}^L \|x^i - x^{i+1} \|_Q \right)^2  \\
&= \left(\sum_{i=r+1}^L \|x^i - x^{i+1} \|_Q\right)^2 \\
&\le \left(\sum_{i=0}^L \|x^i - x^{i+1} \|_Q \right)^2 \\
&\le L \sum_{i=0}^L \|x^i - x^{i+1}\|_Q^2 \\
&\le L \left(\|x^\nu-x^*\|_Q^2 - \|x^{\nu+1} - x^*\|_Q^2\right). 
\end{align*}

Thus, we get that 
\begin{align*} 
    \mu^2\|x^{\nu+1}x^*\|^2 &\le d(x^{\nu+1},H_{x^{\nu+1}}^\nu)^2  \\
                            &\le L\left(\|x^\nu-x^*\|^2 - \|x^{\nu+1} - x^*\|_Q^2\right).
\end{align*}

Rearranging, we get that \[ \|x^{\nu+1}-x^*\|_Q^2 \le \frac{L}{L+\mu^2} \|x^\nu-x^*\|_Q^2. \]
\end{proof}

As a corollary to the above theorem, we have that algorithm 1 converges linearly. 

{\reftheorem{thm:linear} \ref{part:3prime}) If $f$ is a strictly convex quadratic function, $H_i$ are strongly zone consistent with respect to $f$,  $x^0 = H^{-1} s \in S$, and the oracle $\mathcal{Q}$ satisfies either property \ref{prop:sep} or \ref{prop:rand}, then there exists $\rho \in (0,1)$ such that \begin{equation}  \lim_{\nu \to \infty } \frac{\|x^* - x^{\nu+1} \|_H}{\|x^* - x^\nu \|_H} \le \rho \tag{\ref{eq:conv}} \end{equation} where $\|y\|_H ^2 = y^THy$. In the case when we have an oracle that satisfies property \ref{prop:rand}, the limit in \ref{eq:conv} holds with probability $1$. }

\begin{proof} Using Proposition \ref{prop:active}, Lemma \ref{lem:converse}, and that we have finitely many constraints, we see that if $\nu$ is large enough, the assumptions for Theorem \ref{thm:linearIter} are satisfied. Taking the limit gives us the needed result. 

In the case when we have an oracle that satisfies property \ref{prop:rand}, consider the product space of all possible sequences of hyper-planes returned by our oracle. In this product space, we see that with probability $1$, we generate a sequence of hyper-planes, such that algorithm \ref{alg:bregman} converges. For any such sequence of hyper-planes, we have that \ref{eq:conv} holds. Thus, the limit in \ref{eq:conv} holds with probability $1$ for random separation oracles. 
\end{proof} 

\subsubsection{Proof of part \ref{part:3prime} of Theorem \ref{thm:linear}---General}

The rate of convergence for the general Bregman method was established in \cite{IusemB}. To show this, let $\tilde{f}$ be the 2nd degree Taylor polynomial of $f$ centered at the optimal solution $x^*$. \[ \tilde{f}(x) = f(x^*) + \nabla f(x^*)^T \cdot x + \frac{1}{2} x^T \cdot  \nabla^2 f(x^*) \cdot  x. \] For notational convenience, let  $H$ be the Hessian of $f$ at $x^*$. Then we can see that if replace $f$ with $\tilde{f}$ in \ref{problem:linear} then the optimal solution does not change. Thus, if had access to $\tilde{f}$ and could use this function to do our projections, then from the quadratic case we have our result. 

Thus, to get the general result, if $x^\nu$ is our standard iterate and $\tilde{x}^\nu$ is the iterate produced by using $\tilde{f}$ instead of $f$, then \cite{IusemB} shows that $\|x^\nu - \tilde{x}^\nu\|$ is $o(\|x^\nu - x^*\|_H)$. Specifically, we can extract the following theorem from \cite{IusemB}.  

\begin{thm} \cite{IusemB} \label{thm:op} Let $x^*$ is the optimal solution for problem \ref{problem:linear} and $\tilde{x}^n$ is the sequence produced by using the same sequence of hyper-planes but with $\tilde{f}$ instead of $f$. Given a sequence $x^n$ produced by Bregman projections, such that $x^n \to x^*$, and for large enough $\nu$ we satisfy assumptions (1), (2), and (3), then $\|x^\nu - \tilde{x}^\nu\|$ is $o(\|x^\nu - x^*\|_H)$ \end{thm}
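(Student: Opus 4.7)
The plan is to reduce the statement to a local comparison between the Bregman projection operators associated with $f$ and with its second order Taylor polynomial $\tilde f$, applied to the same hyper-plane and the same starting point. The two sequences $x^n$ and $\tilde x^n$ are driven by the identical sequence of hyper-planes $H_{i(n)}$, start from the same $x^\nu$ at the beginning of iteration $\nu$, and differ only in which strictly convex function is used to define the projection. So I would isolate a single sub-step: given a point $y$ and a hyper-plane $H = \{w : \langle a, w\rangle = b\}$, compare $P_f(y)$, defined implicitly by $\nabla f(P_f(y)) - \nabla f(y) = \theta_f\, a$ together with $\langle a, P_f(y)\rangle = b$, against $P_{\tilde f}(y) = y + \theta_{\tilde f} H^{-1} a$ with the explicit scalar $\theta_{\tilde f} = (b - \langle a, y\rangle)/(a^T H^{-1} a)$.

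The key estimate is Taylor's theorem applied to $\nabla f$ at $x^*$: since $f$ is twice differentiable at $x^*$, we have $\nabla f(w) = \nabla f(x^*) + H(w - x^*) + r(w)$ with $\|r(w)\| = o(\|w - x^*\|)$. Plugging this into the implicit definition of $P_f(y)$ and subtracting the corresponding identity for $P_{\tilde f}(y)$, the linear-in-$H$ parts cancel and one obtains a fixed-point style equation whose residual is controlled by the modulus of continuity of $r$ near $x^*$. I would bound this to get $\|P_f(y) - P_{\tilde f}(y)\|_H \le \omega(\|y - x^*\|_H)\cdot \|y - x^*\|_H$ for some $\omega(t) \to 0$ as $t \to 0$, uniformly over the finitely many hyper-planes in play (which we may restrict to, via Proposition \ref{prop:active} and Lemma \ref{lem:converse}, once $\nu$ is large enough so that assumptions (1)--(3) hold).

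Next I would propagate this one-step estimate across the $L$ sub-projections in iteration $\nu$. Write $x^{\nu,0} = \tilde x^{\nu,0} = x^\nu$ and $x^{\nu,k+1} = P_f^{i(k)}(x^{\nu,k})$, $\tilde x^{\nu,k+1} = P_{\tilde f}^{i(k)}(\tilde x^{\nu,k})$. Then
\begin{equation*}
\|x^{\nu,k+1} - \tilde x^{\nu,k+1}\|_H \le \|P_f^{i(k)}(x^{\nu,k}) - P_{\tilde f}^{i(k)}(x^{\nu,k})\|_H + \|P_{\tilde f}^{i(k)}(x^{\nu,k}) - P_{\tilde f}^{i(k)}(\tilde x^{\nu,k})\|_H,
\end{equation*}
where the first term is $o(\|x^{\nu,k} - x^*\|_H)$ by the local comparison, and the second is bounded by $\|x^{\nu,k} - \tilde x^{\nu,k}\|_H$ because $P_{\tilde f}^{i(k)}$ is a nonexpansion in the $H$-norm (it is an $\tilde f$-Bregman projection, hence orthogonal in that inner product). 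Since $L$ is uniformly bounded once the active set has stabilized, and since $\|x^{\nu,k} - x^*\|_H$ stays within a constant factor of $\|x^\nu - x^*\|_H$ by the same non-expansion property, unrolling the recursion and summing $L$ terms of order $o(\|x^\nu - x^*\|_H)$ yields $\|x^{\nu+1} - \tilde x^{\nu+1}\|_H = o(\|x^\nu - x^*\|_H)$, which is the claim.

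The main obstacle is the clean separation of the two sources of error in a single estimate: the Taylor remainder in the implicit definition of $P_f$, and the propagated gap between $x^{\nu,k}$ and $\tilde x^{\nu,k}$. One needs to verify that the intermediate points $x^{\nu,k}$ stay in a neighborhood of $x^*$ where the Taylor expansion is valid with a uniform modulus, which follows from $x^\nu \to x^*$ together with the nonexpansiveness of each $P_{\tilde f}^{i(k)}$ in the $H$-norm and uniform Lipschitz control on $P_f^{i(k)} - P_{\tilde f}^{i(k)}$ over the finite active constraint set. Once this local uniformity is in hand, the telescoping argument above delivers the desired $o(\|x^\nu - x^*\|_H)$ bound and completes the reduction to the quadratic case already analyzed in Theorem \ref{thm:linearIter}.
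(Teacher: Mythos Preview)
The paper does not prove this theorem: it is quoted from \cite{IusemB} and used as a black box in the reduction of the general case to the quadratic one. There is therefore no in-paper argument to compare against, only the cited source.

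Your outline is in fact the argument of \cite{IusemB}: Taylor-expand $\nabla f$ at $x^*$, compare the $f$- and $\tilde f$-projection operators onto a single hyperplane, use that the $\tilde f$-projection is the $H$-orthogonal projection (hence $1$-Lipschitz in $\|\cdot\|_H$) to propagate the one-step error, and sum over the bounded number $L$ of sub-steps in an outer iteration. Restricting to the finitely many active hyperplanes via Proposition~\ref{prop:active} and Lemma~\ref{lem:converse} to get uniformity is also the right move.

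There is one genuine gap. You model each sub-step as the full Bregman projection onto $H_{i(n)}$, i.e.\ you implicitly take $c^n=\theta^n$. But line~4 of \textsc{Project} sets $c^n=\min(z^n_{i(n)},\theta^n)$, and this cap can bind even for an active constraint $i\in I$ whenever the current iterate sits on the feasible side of $H_i$ (so $\theta^n>0$). In that case the update is not the projection onto $H_i$ but a partial move of size $z^n_{i(n)}$, and the dual variables themselves evolve differently under $f$ and $\tilde f$ because $\theta^n$ depends on which function is used. Your one-step estimate $\|P_f(y)-P_{\tilde f}(y)\|_H=o(\|y-x^*\|_H)$ therefore does not cover all sub-steps as written, and the recursion for $\|x^{\nu,k}-\tilde x^{\nu,k}\|_H$ is incomplete. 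The repair in \cite{IusemB} is to track the pair $(x,z)$ rather than $x$ alone: one shows that $|c_f^n-c_{\tilde f}^n|$ is itself $o(\|x^\nu-x^*\|_H)$, either because both caps are slack (so both steps are full projections and your estimate applies) or because the accumulated difference in dual variables over the iteration is controlled by the same Taylor remainder. With that in hand, the telescoping argument goes through for the actual map $y\mapsto(\nabla f)^{-1}\bigl(\nabla f(y)+c\,a_i\bigr)$ rather than for the idealized hyperplane projection.

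A smaller point: you take $\tilde x^{\nu,0}=x^\nu$, i.e.\ you re-couple the two sequences at the start of every outer iteration. This is the correct reading for how the result is applied immediately after the theorem statement, but the statement itself is phrased as if $\tilde x^n$ were a single autonomous sequence. Be explicit about which interpretation you use; with the re-coupled one your conclusion $\|x^{\nu+1}-\tilde x^{\nu+1}\|_H=o(\|x^\nu-x^*\|_H)$ is exactly what the subsequent chain of inequalities needs.
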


Using this we can get the general result as follows 

\begin{align*} \|x^{\nu+1} - x^*\|_H &\le \|x^{\nu+1} - \tilde{x}^{\nu+1}\|_H + \|\tilde{x}^{\nu+1} - x^*\|_H \\
&\le  \|x^{\nu+1} - \tilde{x}^{\nu+1}\|_H + \rho \|\tilde{x}^{\nu} - x^*\|_H  \\
&\le  \|x^{\nu+1} - \tilde{x}^{\nu+1}\|_H + \rho \|\tilde{x}^{\nu} - x^{\nu}\|_H \\
& \ \ \ \ + \rho\|x^{\nu} - x^*\|_H. \end{align*}

Then diving by $\|x^{\nu+1} - x^*\|_H$, and using Theorem \ref{thm:op} to take the limit, we get that there exists $\rho \in (0,1)$ such that 
\begin{equation} 
    \lim_{\nu \to \infty } \frac{\|x^* - x^{\nu+1} \|_H}{\|x^* - x^\nu \|_H} \le \rho. \tag{\ref{eq:conv}}  
\end{equation} 
As with the quadratic case, we see that is an oracle satisfies property \ref{prop:rand}, then \ref{eq:conv} holds with probability $1$. Thus, we have proved Theorem \ref{thm:linear} in its complete generality.

\subsection{Proof of Theorem \ref{thm:linear-stochastic}} \label{sec:proof-linear-stochastic}

In this section we prove Theorem \ref{thm:linear-stochastic} in essentially the same manner as we did for Theorem \ref{thm:linear} and so we outline only what changes are necessary. 

{\reftheorem{thm:linear-stochastic} If $f \in \mathcal{B}(S)$, the hyper-planes $H_i$ are strongly zone consistent with respect to $f$, and $\exists\, x^0 \in S$ such that $\nabla f(x^0) = 0$, then with probability $1$ any sequence $x^n$ produced by the algorithm converges to the optimal solution of problem \ref{problem:linear}. Furthermore, if $x^*$ is the optimal solution, $f$ is twice differentiable at $x^*$, and the Hessian $H := H f(x^*)$ is positive semi-definite, then there exists $\rho \in (0,1)$ such that with probability $1$, 
\begin{equation}
	\liminf_{\nu \to \infty } \frac{\|x^* - x^{\nu+1} \|_H}{\|x^* - x^\nu \|_H} \le \rho. \tag{\ref{eq:conv2}}
\end{equation} }

To prove Theorem \ref{thm:linear-stochastic}, we need to analyze only what goes wrong if the algorithm ``forgets'' all of the old constraints. First, consider the proof in the case that we converge to the optimal solution. Then, steps 1,2, and 3 are completely unaffected by forgetting old constraints. The only step that is affected is step 4. In a previous proof, we argued that if for some inactive constraint $a_i$, $z_i$  is non-zero infinitely often, then we projected onto this constraint infinitely often. In our present setting, we cannot conclude this directly as $z_i^\nu > 0$ does not imply that we remember $a_i$ on the $\nu$th iteration. However, due to property \ref{prop:rand}, we know that $\mathcal{Q}$ returns $a_i$ with probability at least $\tau$. Thus, again using the Borel Cantelli Lemmas, we see that we have $a_i$ infinitely often and this iteration converges to the optimal.

To prove the second part of the theorem, we recall from Theorem \ref{thm:op} that we only need to analyze the case when $f$ is a quadratic function. Indeed, the only place where we used the fact that we remembered old constraints was in the proof of Theorem \ref{thm:linearIter} in which we needed to remember old constraints to guarantee that during the $\nu$th iteration we project onto the constraint $a_i$ that is furthest from $x^\nu$ among those constraints for which $z_i^{\nu+1} > 0$. We cannot guarantee that this happens always but we can guarantee that it happens infinitely often.

Therefore, the conclusion of Theorem \ref{thm:linearIter} holds infinitely often instead of for the tail of the sequence and we replace the limit with a limit infimum to obtain the desired result. 

\subsection{General Convex Proof}

We will need the following facts from \cite{dykstra}.

\begin{enumerate}
\item \label{fact:1} A convex function if Legendre if and only if its convex conjugate is Legendre. In this case the gradient mapping 
\[ \nabla f : int(dom(f)) \to int(dom f^*) : x \mapsto \nabla f(x) \] is a topological isomorphism with inverse mapping $(\nabla f)^{-1} = \nabla f^*$. 
\item \label{fact:2} Suppose $f$ is Legendre on $E$ and a $S$ is a closed convex set in $E$ with $S \cap int(dom(f)) \neq \emptyset$. Suppose further $y \in int(dom(f))$. Then the Bregman projection $P_S^{f}y$ of $y$ onto $S$ with respect to $f$ is characterized by \[ P_S^fy \in S \cap int(dom(f)) \text{ and } \langle \nabla f(y) - \nabla f(P_S^fy), S - P_S^fy\rangle \le 0. \]

In addition, $D_f(P_S^fy,y) \le D_f(s,y) - D_f(s,P_S^fy)$ for all $s \in S \cap dom(f)$. Here when we write $\langle \nabla f(y) - \nabla f(P_S^fy), S - P_S^fy\rangle \le 0$, we mean that $\langle \nabla f(y) - \nabla f(P_S^fy), s - P_S^fy\rangle \le 0$, is true for all $s \in S$. 
\item \label{fact:3} Three point identity: Suppose $f$ is Legendre on $E$. If $x,y \in int(dom f)$ and $b \in dom f$, then \[ D_f(b,x) + D_f(x,y) - D_f(b,y) = \langle \nabla f(x) - \nabla f(y, x-b)\rangle. \]
\item \label{fact:4} Suppose $f$ is a very strictly convex function on $E$. Then for every compact convex subset $K$ of $int(dom f)$, there exists reals $0< \theta$ and $\Theta < +\infty$ such that for every $x, y \in K$ we have that \[ D_f(x,y) \ge \theta \|x-y\|^2 \text{ and } \|\nabla f(x) - \nabla f(y)\| \le \Theta \|x-y\|. \]
\end{enumerate}

On the $nth$ iteration we projection onto the constraint given by $i(n)$. Thus, we have that
\begin{equation}  x^n \in C_{i(n)} \cap dom(f). \label{eq:1} \end{equation} Then since $f$ is Legendre, using Facts \ref{fact:1} and \ref{fact:2} with $y = \nabla f^*(\nabla f(x^{n-1}) + q^{p(n)})$, $S = C_{i(n)}$ and $P_S^fy = x^n$, we have that \[ \langle \nabla f(x^{n-1}) + q^{p(n)} - \nabla f(x^n), C_{i(n)} - x^n \rangle \le 0. \] Then by definition of $q^{n}$ we have that \begin{equation} \label{eq:2} \langle q^{n} , x^n - C_{i(n)}  \rangle \ge 0. \end{equation}
Then we can also see that \begin{equation} \label{eq:tel} \nabla f(x^{n-1}) - \nabla f(x^n) = q^n - q^{p(n)}. \end{equation} Then summing over this telescoping sum and the fact that the $q$ are initialized to be 0 we get that \begin{equation} \label{eq:4} \nabla f(x^0) - \nabla f(x^n) = \sum_{k \in \mathcal{C}} q^{p(k,n)}. \end{equation} Here $\mathcal{C}$ stores the indices of all the constraints. To see why this is true, consider what happens if we sum the terms for just one constraint set. If we see the right hand side of Equation \ref{eq:tel}, then we see that only the $q$ from the last time we projected onto that set remains.

Let $c \in int(dom f)$. Then we need to prove the following crucial equality. 
\begin{align*} D_f(c,x^n) &= D_f(c,x^{n+1}) + D_f(x^{n+1}, x^n) - \langle \nabla f(x^{n+1}) - \nabla f(x^n), x^{n+1} - c \rangle &[\text{Fact \ref{fact:3}}] \\
&=  D_f(c,x^{n+1}) + D_f(x^{n+1}, x^n) + \langle q^{(n+1)} - q^{p(n+1)}, x^{n+1} - c \rangle &[\text{Eq \ref{eq:tel}}] \\
&=  D_f(c,x^{n+1}) + D_f(x^{n+1}, x^n) + \langle q^{n+1}, x^{n+1}-c\rangle  - \langle q^{p(n+1)}, x^{n+1} - c\rangle \\
&=  D_f(c,x^{n+1}) + D_f(x^{n+1}, x^n) + \langle q^{n+1}, x^{n+1}-c\rangle \\& \qquad \qquad \qquad \qquad - \langle q^{p(n+1)}, x^{n+1} + x^{p(n+1)} - x^{p(n+1)} - c\rangle \\
&=  D_f(c,x^{n+1}) + D_f(x^{n+1}, x^n) + \langle q^{n+1}, x^{n+1}-c\rangle \\& \qquad \qquad \qquad \qquad  - \langle q^{p(n+1)}, x^{n+1} - x^{p(n+1)} \rangle  + \langle q^{p(n+1)}, x^{p(n+1)} - c\rangle. \\
\end{align*}
Using induction we can now prove that for $c \in int(dom f)$ and for every $n \ge 0$ we have that \[ D_f(c,x^0) = D_f(c,x^n) + \sum_{k=1}^n\left( D_f(x^k,x^{k-1}) + \langle q^{p(k)}, x^{p(k)} - x^k\rangle \right)+ \sum_{k \in \mathcal{C}} \langle x^{p(k,n)} - c,q^{p(k,n)} \rangle. \]

Using Equation \ref{eq:2}, we have that all the inner products on the right hand of the above equation are non-negative. Since the Bregman distances are always non-negative, every term on the right hand side in non-negative. Thus, if we take the limit as $n \to \infty$ of the equation we get that $D_f(c,x^n)$ is a convergent sequence. Thus, is bounded. We also get that $S_n = \sum_{k=1}^n D_f(x^k,x^{k-1})$ is also a convergent sequence and hence is bounded. Thus, we get that \[ \lim_{n \to \infty} D_f(x^n,x^{n+1}) = 0. \]

Then using the previous reasoning we see that $x^n$ is a bounded sequence and has accumulation points. Furthermore by \cite{dykstra} we have that all the accumulation points lie inside $dom f$ and we have that $x^k - x^{k-1} \to 0$. 

Now using Equation \ref{eq:4} and adding and subtracting $x^{p(k,n)}$ from $c-x^n$, we get that \begin{equation} \langle c-x^n, \nabla f(x^0) - \nabla f(x^n) \rangle = \underbrace{\sum_{k \in \mathcal{C}} \langle c - x^{p(k,n)}, q^{p(k,n)} \rangle}_{S_1(n)} + \underbrace{\sum_{k \in \mathcal{C}} \langle x^{p(k,n)}- x^n, q^{p(k,n)} \rangle}_{S_2(n)}. \end{equation}

Using Equation \ref{eq:2}, we see that $S_1(n)$ is non positive. Now we want to prove that \begin{equation} \label{eq:6} \lim_n \sum_{k \in \mathcal{C}} |\langle x^{p(k,n)}- x^n, q^{p(k,n)} \rangle| \stackrel{?}{=} 0. \end{equation}

Let $K := cl(conv(\{x^n\}))$. That is, $K$ is the smallest closed convex set containing the sequence $\{x^n\}$. Then we have that \[ K = conv(cl(x^n)) = conv(\{x^n\} \cup \{\text{ cluster points of } x^n\}) \subseteq conv(int(dom f)) = int(dom f). \]
The first equality comes from the fact that the cluster points of $x^n$ are exactly the points needed to make the set $\{x^n\}$ closed. The second comes from the fact that the clusters points $x^n$ are in the interior of the domain of $f$, as we previously showed. Finally, since we assumed that the domain of $f$ is convex, we get the last equality. 

Then, since $f$ is very strictly convex and $K$ is compact (we showed that the sequence $\{x^n\}$ is bounded), we can use Fact \ref{fact:4} to see that there exists $0< \theta$ and $\Theta < +\infty$, such that \[ D_f(x^k,x^{k-1}) \ge \theta \|x^k - x^{k-1}\|^2 \text{ and } \|\nabla f(x^k) - \nabla f(x^{k-1}) \| \le \Theta \|x^k-x^{k-1}\|. \]
Then using the fact that $\sum_{k=1}^\infty D_f(x^k,x^{k-1}) < \infty$ we have that \[ \sum_{k=1}^\infty \|x^k -x^{k-1}\|^2 \le \frac{1}{\theta}\sum_{k=1}^\infty D_f(x^k,x^{k-1})  < \infty. \]
Now consider the following telescoping identity
\[ q^n = (q^n - q^{p(n)}) - (q^{p(n)} - q^{p(p(n))}) - \hdots - (q^{p^k(n)} - q^{p^{k+1}(n)}) - \hdots 0. \]
Then, we get that \[ \sum_{k \in \mathcal{C}} \|q^{p(k,n+1)} \| \le \sum_{k=1}^n \|q^k - q^{p(k)}\| = \sum_{k=1}^n \|\nabla f(x^k) - \nabla f(x^{k-1})\| \le \Theta \sum_{k=1}^n \|x^k-x^{k-1}\|. \]

Now note that we only forget a constraint if $q^k = 0$ and that in each iteration we look at most $2M$ constraints (where $M$ is number of constraints). Thus, if $q^{p(k,n)} \neq 0$ then, we must have that $p(k,n) \ge n - 4M$. That is, we either have projected onto that constraint in the current iteration or if we haven't projected onto that constraint yet in the current iteration, we must have projected onto it in the previous iteration. Thus, using the above we get that 

\begin{align} \sum_{k \in \mathcal{C}} |\langle x^{p(k,n)}- x^n, q^{p(k,n)} \rangle &\le \sum_{k \in \mathcal{C}} \| x^{p(k,n)}- x^n\|\| q^{p(k,n)}\| \\
&\le  \sum_{k \in \mathcal{C}} \|q^{p(k,n+1)} \|  \sum_{k = n-8M}^n \|x^{k}-x^{k-1}\| \label{eq:nottrue} \\
&\le \Theta \sum_{k=1}^n \|x^k-x^{k-1}\| \sum_{k = n-4M}^n \|x^{k}-x^{k-1}\|. \end{align}

Then using proposition 3.1 from \cite{dykstra} with the fact that $ \sum_{k=1}^\infty \|x^k -x^{k-1}\|^2 < \infty $ we get that the limit of the right hand side is 0. Thus, Equation \ref{eq:6} is true. Now looking at Equation \ref{eq:6} again we see that there exists a subsequence $k_n$ such that \[ \sum_{k \in \mathcal{C}} |\langle x^{p(k,k_n)} - x^{k_n}, q^{p(k,k_n)} \rangle | \to 0 \text{ and } 0 \ge \limsup_n \langle c-x^{k_n}, \nabla f(x^0) - \nabla f(x^{k_n}) \rangle. \]
Let $x^*$ be the accumulation point for this convergent subsequence (we may need to pass to another subsequence). Now we want to show that $x^*$ is feasible.

Assume for the sake of contradiction that $x^*$ is not feasible. Let $\tilde{C}$ be the set of constraint sets that $x^*$ belongs to, i.e., the set of constraints $x^*$ satisfies. Recall that $C$ is the set of all constraints, and let \[
    \epsilon = \phi\left(\inf_{\hat{C} \in C \cap \tilde{C}^c} \dist(x^*,\hat{C})\right). \]
Let $N$ be such that for all $k \ge N$ we have that $\|x^{n_k} - x^*\| < \epsilon/2$. Then by Equation 6.48 from \cite{book}, since $x^n$ is bounded, $x^{n_k} \to x^*$, and $D(x^{n+1}, x^n) \to 0$ we have that for any $t$, 
\[ x^{n_k + t} \to x^*. \] Thus, in particular, we see that for all $t \le 2|\tilde{C}| + 2M +1 =: T$ this is true. 

Consider our augmented subsequence $x^{n_k},x^{n_k+1},\hdots, x^{n_k+T}$ (note if $n_{k+1}-n_k \to \infty$ then this is not all points in the sequence). We want to show that infinitely often one of these values satisfies a constraint not in $\tilde{C}$. Then since we have finitely many constraints, at least one constraint $\tilde{c}$ not in $\tilde{C}$ must be satisfied infinitely often by our augmented sequence. Then since our augmented sequence converges to $x^*$ and we are only looking at closed constraints, we must have $x^*$ satisfies this constraint. Which is a contradiction. Thus, $x^*$ must be feasible.

Let $k > N$. Then we know that $x^{n_k}$ is the variable that we get after the $n_k$th projection. Let's assume that this happens in iteration $\nu_k$. Note that in any iteration we project onto any constraint at most twice, once if its in our list $C^{\nu_k}$ and once if its in the list $L$ returned by the oracle. Thus we have two possibilities for which iteration the $n_k + 2|\tilde{C}|+1$ projection takes place. Let us case on when that happens. 

\textbf{Case 1:} If $n_k + 2|\tilde{C}|+1$ takes places in $\nu_k$th iteration infinitely often, then in case since we project onto each constraint at most twice, by pigeonhole principle one of the projections between $n_k$ and $n_k + 2|\tilde{C}| + 1$ must be onto a constraint not represented in $\tilde{C}$. Thus, if we look amongst $x^{n_k},x^{n_k+1},\hdots, x^{n_k+2|\tilde{C}|+1}$, we must have projected onto a constraint not in $\tilde{C}$ infinitely often. 

\textbf{Case 2:} If $n_k + 2|\tilde{C}|+1$ takes places in $\nu_k+1$th iteration infinitely often. Then since our oracle returns at most $M$ constraints, we must have that amongst $x^{n_k},x^{n_k+1},\hdots, x^{n_k+T}$ we must have projected onto all of the constraints returned by the oracle during step 1 of the $\nu_k+1$th iteration. Thus, we must have projected onto some constraint $\hat{c}^{\nu_k+1}$ such that distance of $x^{\nu+1}$ to $\hat{c}^{\nu_k+1}$ is at least $\phi(d_{\nu_k+1})$ where $d_{\nu_k+1}$ is the distance from $x^{\nu_k+1}$ to our feasible region $C$. Then, we have that 
\[
    \dist(x^{\nu_k+1},\hat{c}^{\nu_k+1}) \ge \phi(d_{\nu_k+1}) \ge \phi\left(\inf_{\hat{C} \in C \cap \tilde{C}^c} \dist(x^{\nu_k+1},\hat{C})\right). \]
Noting that $x^{\nu+1}$ is within our augmented sequence and taking the limit, we get that \[
\lim_{k \to \infty} \dist(x^{\nu_k+1} , \hat{c}^{\nu_k+1}) \ge \epsilon.\] 
Since, we have finitely many constraints, we may assume (by passing to a subsequence) that all $\hat{c}^{\nu_k+1}$ are equal to some constraint $\hat{c}$. Then, we have that \[ \dist(x^*, \hat{c}) \ge \epsilon. \]
Thus, $\hat{c} \not\in \tilde{C}$. However, we now project onto $\hat{c}$ infinitely often. Hence $x^*$ must satisfy the constraint $\hat{c}$. Hence we have a contradiction. Thus, $x^*$ is feasible. 

Finally, we have that since $D_f(\cdot, \cdot)$ is separately continuous, we have that $D_f(c,x^{k_n}) \to D_f(c,x^*)$ and $D_f(x^{k_n},x^0) \to D_f(x^*,x^0)$. Thus for an arbitrary feasible $c$ we have that 

\begin{align*}  \langle c-x^*, \nabla f(x^0) - \nabla f(x^{*}) \rangle &= D_f(c,x^*) + D_f(x^*,x^0) - D_f(c,x^0) \\
&= \lim_n D_f(c,x^{k_n}) + D_f(x^{k_n},x^0) - D_f(c,x^0) \\
&= \lim_n \langle c-x^{k_n}, \nabla f(x^0) - \nabla f(x^{k_n}) \rangle \\
&\le 0. \end{align*}

Then by Fact \ref{fact:3}, we have that $x^* = P_C^f(x^0)$. Thus, all accumulation points are optimal points. Thus, by strict convexity of $f$, we see that this is the unique solution. 

Now to get the unique solution to constrained minimization problem we need to initialize $x^0$ such that $\nabla f(x^0) = 0$. This is because $x^*$ is the point that minimized \[ D_f(x,x^0) = f(x) - f(x^0) - \langle \nabla f(x^0), x-x^0 \rangle. \] Thus, if the last term is 0, then second term is a constant and we minimize the first term which is what we want. 

\subsection{Convergence Rate for Quadratic Objective Function}

\begin{lem} For any $C_i$ such that $x^* \in int(C_i)$ we have that we only look at this constraint finitely often. \end{lem}
\begin{proof} Every time we look at a constraint we project onto its boundary. Thus, if we project onto $C_i$ infinitely often $x^*$ cannot be in the interior of $C_i$. 
\end{proof}

Let $I$ be the set of all active constraints. That is if $x^*$ is the optimal solution then \[ I = \{ i : x^* \in \partial C_i\}\]

Let $S$ be the set of all $x$ that satisfy these constraints (namely $S = \{ x : \forall i \in I, x \in C_i \}$). Let $B_x$ be the boundary for a constraint in $I$ furthest from $x$ and define \[ \mu = \inf_{x \not\in S} \frac{d(x, B_x)}{d(x,S)} > 0 \]

Let $U$ be the set of all optimal duals $q's$ that is $U = \{ (q_1, \hdots, q_N) :\nabla f(x^0) - \nabla f(x^*) = \sum_i q_i \}$ and let $I_\nu =  \{ i : q^{p(i,\nu+1)} \neq 0\}$.

\begin{lem} \label{lem:converse-general} For any sequence $x^n \to x^*$ where $x^*$ is the optimal for an instance of problem \ref{problem:convex}, and $x^n,q^n$ maintain the KKT conditions then there exists an $M'$ such that for all $\nu \ge M"$ there exists a $z \in U$ such that for all $i \not\in I_\nu$  we have that $q_i = 0$ \end{lem}
\begin{proof} Let $V_\nu = \{ (q_1, \hdots, q-N) : \forall i \not\in  I_\nu, q_i = 0\}$. Then assume for the sake of contradiction that the result is false. Thus, there is a sequence $\nu_k$ such that $V_{\nu_k} \cap U = \emptyset$. Then since there finitely many different $I_\nu$ (hence finitely many $V_\nu$) we have that one of these must occur infinitely often. Thus, by taking an appropriate subsequence we assume that without loss of generality that all $I_{\nu_k}$ are all equal. Thus, let $V=V_{\nu_k}$. Thus, we have that $V \cap U = \emptyset$. 

Then since $V$ is a subspace (hence closed), $U$ is a closed set, and they are disjoint, we must have that $d(V,U) > 0$. But now $(q^{p(1,\nu_k+1)}, \hdots, q^{p(N,\nu_k+1)}) \in V$. Thus, \[ d((q^{p(1,\nu_k+1)}, \hdots, q^{p(N,\nu_k+1)}) , U) \ge d(V,U) > 0\] By Theorem \ref{thm:convergence-general} we have that $x^\nu \to x^*$. Thus, for any $(q_1,\hdots, q_N) \in U$, \[ \sum_iq^{p(i,\nu_k+1)} = \nabla f(x^0) - \nabla f(x^\nu)  \to \nabla f(x^0)- \nabla f(x^*) =\sum_i q_i\] Thus $d((q^{p(1,\nu_k+1)}, \hdots, q^{p(N,\nu_k+1)}), U) \to 0$ which is a contradiction \end{proof}

The rest of the proof is same as for Theorem \ref{thm:linear}. As before we get the following corollaries.

\begin{cor} If we further have that there exists an $x^0 \in dom f$ such that $\nabla f(x^0) = 0$. Then we have that the Bregman projection of $x^0$ onto $C$ is the solution to Problem \ref{problem:convex}. \end{cor}

\begin{rem} Similar to before while this gives linear convergence but $\rho$ is very close to one. As we will see later $\rho \le \frac{F}{F+1}$ where $F$ is number of convex sets whose boundary the optimal solution lies on that are seen by the algorithm.  \end{rem}

\begin{rem} Unlike in the linear case, our proof does not guarantee that a truly stochastic version will converge to the optimal solution. Equation \ref{eq:nottrue} does not hold in this case.  \end{rem}

\newpage

\bibliography{citations}

\end{document}